\newtheorem{theorem}{Theorem}
\newtheorem{lemma}{Lemma}
\newcommand{\PreserveBackslash}[1]{\let\temp=\\#1\let\\=\temp}
\newcolumntype{C}[1]{>{\PreserveBackslash\centering}p{#1}}
\newcolumntype{R}[1]{>{\PreserveBackslash\raggedleft}p{#1}}
\newcolumntype{L}[1]{>{\PreserveBackslash\raggedright}p{#1}}
\begin{document}

\title{OPML: A One-Pass Closed-Form Solution for Online Metric Learning}
%
%
%


\author[1]{Wenbin Li
\thanks{Email addresses: liwenbin.nju@gmail.com (Wenbin Li), gaoy@nju.edu.cn (Yang Gao), leiw@uow.edu.au (Lei Wang), lupingz@uow.edu.au (Luping Zhou), huojing1989@gmail.com (Jing Huo), syh@nju.edu.cn (Yinghuan Shi).}}
\author[1]{Yang Gao}
\author[2]{Lei Wang}
\author[2]{Luping Zhou}
\author[1]{Jing Huo}
\author[1]{Yinghuan Shi}
\affil[1]{National Key Laboratory for Novel Software Technology, Nanjing University, China}
\affil[2]{School of Computing and Information Technology, University of Wollongong, Australia}

%



\maketitle

\begin{abstract}
To achieve a low computational cost when performing online metric learning for large-scale data, we present a one-pass closed-form solution namely OPML in this paper. Typically, the proposed OPML first adopts a one-pass triplet construction strategy, which aims to use only a very small number of triplets to approximate the representation ability of whole original triplets obtained by batch-manner methods. Then, OPML employs a closed-form solution to update the metric for new coming samples, which leads to a low space (i.e., $O(d)$) and time (i.e., $O(d^2)$) complexity, where $d$ is the feature dimensionality. In addition, an extension of OPML (namely COPML) is further proposed to enhance the robustness when in real case the first several samples come from the same class (i.e., cold start problem). In the experiments, we have systematically evaluated our methods (OPML and COPML) on three typical tasks, including UCI data classification, face verification, and abnormal event detection in videos, which aims to fully evaluate the proposed methods on different sample number, different feature dimensionalities and different feature extraction ways (i.e., hand-crafted and deeply-learned). The results show that OPML and COPML can obtain the promising performance with a very low computational cost. Also, the effectiveness of COPML under the cold start setting is experimentally verified.
\end{abstract}

\begin{IEEEkeywords}
One-pass, Online metric learning, Triplet construction, Face verification, Abnormal event detection.
\end{IEEEkeywords}

%
\IEEEpeerreviewmaketitle

\section{Introduction}
\IEEEPARstart{I}{n} computer vision and machine learning, learning a meaningful distance/similarity metric on the original feature presentation of samples, with the given distance constraints (either pairwise similar/dissimilar distance constraints or triplet based relative distance constraints) at the same time, is usually regarded as a crucial and challenging problem, which has been actively studied over the decades. According to the different measure functions (e.g., Mahalanobis distance function and bilinear similarity function), the current metric learning methods can be roughly classified into two categories, i.e., \emph{Mahalanobis distance-based methods} and \emph{bilinear similarity-based methods}. The first class, Mahalanobis distance-based methods, refers to learning a pairwise real-valued distance function, which is parameterized by a symmetric Positive Semi-Definite (PSD) matrix. The second class, bilinear similarity-based methods, aims to learn a form of bilinear similarity function which does not need to impose the PSD constraint on learned metrics.

Recently, instead of batch manner, learning the metric in an online manner, which refers to online metric learning (OML), has attracted lots of interests, with the goal of learning a discriminative metric with partially known sampled data for efficiently dealing with large-scale learning problem. Generally, to satisfy the online processing speed for large-scale learning problem, OML methods are required to well tackle the following two core issues: (1) how to fast construct the triplet (or pair) in the original data, especially for the large-scale data, and (2) how to fast update the metric with the new coming samples in a real time manner.

For fast triplet (or pair) construction (first issue), existing OML methods usually assume that pairwise or triplet constraints can be obtained in advance \cite{shalev2004online}, or by employing the random sampling strategy to reduce the size of triplets \cite{chechik2010large}. However, in real applications, it is usually infeasible to access the entire training set at a time, especially when the training set is relative large, constructing the constraints will be both time- and space-consuming. To this end, we propose a novel one-pass triplet construction strategy to rapidly construct triplets in an online manner. In particular, the strategy selects two latest samples from both the same and different classes of currently available samples respectively, to construct a triplet. Compared with Online Algorithm for Scalable Image Similarity (OASIS) \cite{chechik2010large}, which utilizes a random sampling strategy and stores the entire training data in memory with space complexity of $O(md)$ ($d$ is the feature dimensionality, and $m$ is the data size, which is very large for large-scale data), our one-pass strategy can vastly reduce the space complexity to $O(cd)$ ($c$ is the total number of classes, which is usually small). Also, the time complexity of our triplet construction strategy is $O(1)$, which is truly fast.

For fast metric updating (second issue), several studies \cite{chechik2010large,jain2009online} try to adopt a closed-form solution for accurate computation. Among them, OASIS adopts bilinear similarity learning and has a closed-form solution, while it lacks a good interpretability as Mahalanobis distance metric learning (i.e., linear projection) and the learned similarity function is asymmetric. In contrast, LogDet Exact Gradient Online (LEGO) \cite{jain2009online} attempts to learn a Mahalanobis distance and has a closed-form solution. In addition, LEGO is not required to maintain the PSD constraint by using LogDet regularization, which is time-consuming in some Mahalanobis distance metric learning methods \cite{shalev2004online,jin2009regularized,davis2007information}. However, LEGO is designed for pairwise constraints. Compared with LEGO, we developed a different Mahalanobis distance-based OML method for triplet-based constraints, named as OPML, which also has the property of closed-form solution and does not need projection steps to maintain PSD constraint. Specifically, the proposed OPML directly learns the transformation matrix $\bm{L}$ ($\bm{M}=\bm{L}^T\bm{L}$ is the symmetric PSD matrix usually learnt in Mahalanobis metric learning), such a setting does not require imposing the PSD constraint. By carefully analysing the structure of the triplets based loss and using a few fundamental properties (Lemma \ref{lemma2}, Lemma \ref{lemma3}), a closed-form solution at each step is obtained with the time complexity of $O(d^2)$. The major differences between OPML and OASIS/LEGO can be found in Table \ref{OML-methods}.

\begin{table}[t]\small
\centering
\tabcolsep=2.5pt
\caption{The comparison of different OML methods. MA/BI denotes the Mahalanobis distance-based/bilinear similarity-based method, respectively. The last 3 columns denote the processing time (the unit is ms) per sample with different dimensions}
\vspace{5pt}
\begin{tabular}{|cccc|ccc|}
  \hline
  Method       &Type   &Constraint      &Solution            &d=21          &d=64          &d=310\\
  \hline
  \hline
  POLA         &MA     &Pair            &approximate         &$8$           &$6.2$         &$120$\\
  RDML         &MA     &Pair            &approximate         &$0.040$       &$0.043$       &$2.4$\\
  LEGO         &MA     &Pair            &closed-form         &$0.159$       &$0.472$       &$47$\\
  OASIS        &BI     &Triplet         &closed-form         &$0.029$       &$0.028$       &$9.4$\\
  SOML         &BI     &Triplet         &approximate         &$0.032$       &$0.094$       &$21$\\
  \hline
  OPML         &MA     &Triplet         &closed-form         &$\bm{0.026}$  &$\bm{0.023}$  &$\bm{1.7}$\\
  COPML        &MA     &Pair $\!\&\!$ Triplet &closed-form         &$\bm{0.027}$  &$\bm{0.024}$  &$\bm{1.7}$\\
  \hline
\end{tabular}
\label{OML-methods}
\vspace{-10pt}
\end{table}

Also, in some tasks, e.g., abnormal event detection in videos, the data is usually imbalanced: the first several samples may belong to the same class, then the triplet construction strategy will be invalid until the samples of different classes appear. We call this case as cold start case. Furthermore, to deal with the cold start issue, an extension namely COPML is developed in this paper. Specifically, COPML includes a pre-stage by constructing pairwise constraints for two adjacent samples (from the same class) to update the metric.

To summarize, compared with previous OML methods, the advantages of our work can be concluded as: First, the proposed OPML and COPML are easy to implement. Second, OPML and COPML are scalable to large datasets with a low space (i.e., $O(d)$) and time (i.e., $O(d^2)$) complexity, where $d$ is the feature dimensionality. Third, we have derived several theoretical explanations, including the difference bound between learned metrics of ones-pass and batch triplet construction strategies, the average loss bound between these two strategies and the regret bound, to guarantee the effectiveness of our methods.

The rest of this paper is organized as follows. In section \ref{related-work}, we present the related works of OML methods. Section \ref{our-method} provides the details of the proposed one-pass triplet construction strategy, OPML and COPML algorithms. In section \ref{theory}, we give the theoretical guarantee of our algorithm. The experimental results, comparisons and analysis are given in section \ref{experiments}, followed by conclusions in section \ref{conclusions}.

\section{Related Work}
\label{related-work}
Typically, all the previous online metric learning methods can be roughly classified into two categories: bilinear similarity-based and Mahalanobis distance-based. A comparison of the most related works is given in Table \ref{OML-methods} for better clarification.

In bilinear similarity-based methods, OASIS \cite{chechik2010large} is developed which is based on Passive-Aggressive algorithm \cite{crammer2006online}, aiming to learn a similarity metric for image similarity. Sparse Online Metric Learning (SOML) \cite{gao2014soml} follows a similar setting as OASIS, but learns a diagonal matrix instead of a full matrix to handle very high-dimensional cases. In order to deal with multi-modal data, an online kernel based method, namely Online Multiple Kernel Similarity (OMKS), has been proposed by Xia et al. \cite{xia2014online}. All these above methods are based on triplet constraints and they also assume that the constraints can be gained beforehand or could be randomly sampled on the entire dataset. Among them, OASIS is more relevant to the proposed OPML as both of them are Passive-Aggressive based. The differences between OASIS and the proposed OPML mainly include that OASIS learns a bilinear similarity metric (hard to interpret, and asymmetric) while OPML learns a Mahalanobis metric (good interpretability, and symmetric), OASIS randomly samples triplet constraints from the entire training set (with space complexity of $O(md)$), while OPML constructs triplet constraints in an online manner (with space complexity of $O(cd)$ and time complexity of $O(1)$), which leads the solutions of objective functions largely different.

In Mahalanobis distance-based methods, Pseudo-Metric Online Learning Algorithm (POLA) \cite{shalev2004online} is the first OML method which introduces the successive projection operation to learn the optimal metric. LEGO \cite{jain2009online} is an extended version of Information Theoretic Metric Learning-Online (ITML-Online) \cite{davis2007information}, by building the model with LogDet divergence regularization. Jin et al. \cite{jin2009regularized} presented a regularized OML method namely RDML, with a provable regret bound. Also, Kunapuli and Shavlik proposed an unified approach based on composite mirror descent named as MDML \cite{kunapuli2012mirror}. These methods are all based on pairwise constraints, and they all assume that the pairwise constraints can be obtained in advance except RDML, which exactly receives two adjacent samples as a pairwise constraints at each time. In fact, pairwise constraints based methods can be easily converted to an online manner for pair construction by using the strategy of RDML. However, in general, triplet constraints are more effective than pairwise constraints for learning a metric \cite{weinberger2009distance,chechik2010large,shaw2011learning,qian2015efficient}. In contrast, the proposed OPML is a triplet constraints based Mahalanobis distance method. Besides, the proposed method has the properties of closed-form solution and does not enquire PSD constraint, making it more efficient.

\begin{figure}[!tbp]
      \centering
      \includegraphics[width = 0.4\textwidth]{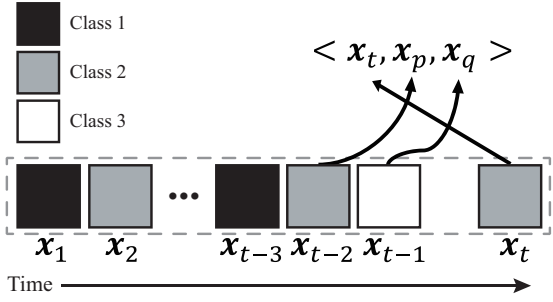}
      \vspace{-10pt}
      \caption{The illustration of one-pass triplet construction. $\bm{x}_1, \bm{x}_2, \dots, \bm{x}_t$ denote the samples at the $1$-th, $2$-th and $t$-th time steps, respectively}
      \label{oml-queue}
      \vspace{-10pt}
\end{figure}

\section{The Proposed Method}
\label{our-method}
We now first present our strategy for one-pass triplet construction, and then discuss the technical details of OPML and COPML, respectively.

\subsection{One-Pass Triplet Construction}
When dealing with large-scale data, how to fast obtain the triplets is a crucial step, since the number of triplets usually determines the tradeoff between the performance effectiveness and time efficiency. Inspired by the impressive scalability of one-pass strategies \cite{stauffer2000learning,shi2010real,gao2013one}, we proposed a one-pass triplet construction strategy, aiming to quickly obtain all the triplets in a single pass over all data. Fig. \ref{oml-queue} illustrates the main idea of the one-pass strategy of triplet construction. Formally, in an online manner, the sample at the $t$-th ($t=1,2,\dots,T$) step is denoted as $\bm{x}_t\in \mathbb{R}^d$. An array $\mathcal{H}=\{h^1,h^2,\dots,h^c\}$ is maintained, where $c$ is the total number of classes, and $h^k\in\mathbb{R}^d$ $(k=1,2,\dots,c)$ means the latest sample of the $k$-th class at the $t$-th time step.

For triplet construction, our goal is to obtain a typical triplet $\langle\bm{x}_t,\bm{x}_p,\bm{x}_q\rangle$ ($\bm{x}_t, \bm{x}_p$ and $\bm{x}_q\in\mathbb{R}^d$), which satisfies that $\bm{x}_t$ and $\bm{x}_p$ belong to the same class, while $\bm{x}_t$ and $\bm{x}_q$ belong to the different classes. Specifically, in one-pass triplet construction, at the $t$-th time step, given $\bm{x}_t$ belonging to the $k$-th class, we assign $h^k$ as $\bm{x}_p$, and randomly pick $h^{k^\prime}$ $(k^\prime=1,2,\dots,c., k^\prime\neq k)$ as $\bm{x}_q$. Then $h^k\in H$ will be replaced by $\bm{x}_t$, and thus $H$ consists of the latest sample of each class. Please note that if $\bm{x}_t$ belongs to the $(c+1)$-th class, we will assign $\bm{x}_t$ to $h^{c+1}$, and also update the number of classes as well as $H$ accordingly.

We can observe that the space complexity of this strategy is $O(cd)$. Basically, when the value of $c$ is small, the space complexity can be regarded as $O(d)$. Also, the time complexity of triplet construction at each time step is $O(1)$.

\subsection{OPML}
\label{OPML-section}
We mainly focus on the Mahalanobis distance learning here, which aims to learn a symmetric PSD matrix $\bm{M}\in\mathbb{S}_+^{d\times d}$ (cone of $d\times d$ real-valued symmetric PSD matrices) and can be formally defined as follows:
\begin{equation}\label{fun1}
  D_{\bm{M}}(\bm{x}_i,\bm{x}_j)=\sqrt{(\bm{x}_i-\bm{x}_j)^\top\bm{M}(\bm{x}_i-\bm{x}_j)},
\end{equation}
where $\bm{x}_i\in\mathbb{R}^d$ and $\bm{x}_j\in\mathbb{R}^d$ are the $i$-th and $j$-th samples, respectively. $\bm{M}$ can be mathematically decomposed as $\bm{L}^\top\bm{L}$, where $\bm{L}\in\mathbb{R}^{r\times d}$ ($r$ is the rank of $\bm{M}$) denotes the transformation matrix. Then, we can rewrite Eq. (\ref{fun1}) as:
\begin{equation}
  D_{\bm{L}}(\bm{x}_i,\bm{x}_j)=\lVert\bm{L}(\bm{x}_i-\bm{x}_j)\rVert_2^2.
\end{equation}

Our goal is to learn a transformation matrix $\bm{L}$ that satisfies the following large margin constraint:
\begin{equation}
  D_{\bm{L}}(\bm{x}_i,\bm{x}_l)>D_{\bm{L}}(\bm{x}_i,\bm{x}_j)+1, \forall \bm{x}_i,\bm{x}_j,\bm{x}_l\in\mathbb{R}^d,
\end{equation}
where $\bm{x}_i$ and $\bm{x}_j$ belong to the same class, while $\bm{x}_i$ and $\bm{x}_l$ belong to different classes. We can define the hinge loss function as below:
\begin{equation}
\mathcal{G}((\bm{x}_i,\bm{x}_j,\bm{x}_l);\bm{L})=\max\big(0,1+D_{\bm{L}}(\bm{x}_i,\bm{x}_j)-D_{\bm{L}}(\bm{x}_i,\bm{x}_l)\big).
\end{equation}

By applying the one-pass triplet construction, at the $t$-th time step, we can obtain the triplet $\langle\bm{x}_t, \bm{x}_p, \bm{x}_q\rangle$. Thus, the online optimization formulation can be defined as follows by using Passive-Aggressive algorithm \cite{crammer2006online}:
\begin{equation}
  \begin{split}
   \bm{L}_t&=\underset{\bm{L}}{\arg\min}\ \Gamma(\bm{L})\\
           &=\underset{\bm{L}}{\arg\min}\ \frac{1}{2}\lVert\bm{L}-\bm{L}_{t-1}\rVert_F^2+\frac{\gamma}{2}[1+\lVert\bm{L}(\bm{x}_t-\bm{x}_p)\rVert_2^2\\
           &-\lVert\bm{L}(\bm{x}_t-\bm{x}_q)\rVert_2^2]_+,
  \end{split}
\end{equation}
where $\gamma$ is the regularization parameter, which is set into the range of $(0,\frac{1}{4})$ as a sufficient condition to theoretically guarantee the positive definite property (see Lemma \ref{lemma2}). $\lVert\cdot\rVert_\text{F}^2$ means Frobenius norm, $\lVert\cdot\rVert_2$ denotes $\ell_2-$norm, and $[z]_+=\max(0,z)$, namely the hinge loss $\mathcal{G}((\bm{x}_t,\bm{x}_p,\bm{x}_q);\bm{L})$.

The optimal solution can be obtained when the gradient vanishes $\frac{\partial\Gamma(\bm{L})}{\partial\bm{L}}=0$, hence we have:
\begin{equation}
   \frac{\partial\Gamma(\bm{L})}{\partial\bm{L}}=\left\{
   \begin{array}{ll}
   \bm{L}-\bm{L}_{t-1}+\gamma\bm{L}\bm{A}_t=0 & [z]_+>0\\
   \\
   \bm{L}-\bm{L}_{t-1}=0 & [z]_+=0,
   \end{array}
   \right.
\end{equation}
where $\bm{A}_t=(\bm{x}_t-\bm{x}_p)(\bm{x}_t-\bm{x}_p)^\top-(\bm{x}_t-\bm{x}_q)(\bm{x}_t-\bm{x}_q)^\top\in\mathbb{R}^{d\times d}$. Since $\bm{x}_t, \bm{x}_p$ and $\bm{x}_q$ are all nonzero vectors, $\bm{x}_t\neq\bm{x}_p$ and $\bm{x}_t\neq\bm{x}_q$, the rank of $(\bm{x}_t-\bm{x}_p)(\bm{x}_t-\bm{x}_p)^\top$ and $(\bm{x}_t-\bm{x}_q)(\bm{x}_t-\bm{x}_q)^\top$ is 1. Hence, the rank of $\bm{A}_t$ is 1 or 2. When $\bm{x}_t-\bm{x}_p\neq\mu(\bm{x}_t-\bm{x}_q)$, we can get that $rank(\bm{A}_t)=2$.
\begin{lemma}
Let $\bm{M}_1, \bm{M}_2$ be two PSD matrices, and $\bm{\Omega}=\bm{M}_1-\bm{M}_2$, the eigenvalue of $\bm{\Omega}$, denoted by $\lambda(\bm{\Omega})$, satisfies the following equation:
\begin{equation}
-\lambda_\text{max}(\bm{M}_2)\le\lambda(\bm{\Omega})\le\lambda_\text{max}(\bm{M}_1),
\end{equation}
where $\lambda_\text{max}(\bm{M}_1)$ and $\lambda_\text{max}(\bm{M}_2)$ are the maximum eigenvalues of $\bm{M}_1$ and $\bm{M}_2$, respectively.
\label{lemma1}
\end{lemma}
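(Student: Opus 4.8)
The plan is to bound each eigenvalue of $\bm{\Omega}$ through the variational (Rayleigh quotient) characterization of eigenvalues of a symmetric matrix, using only the positive semidefiniteness of $\bm{M}_1$ and $\bm{M}_2$. The first thing I would record is that, since $\bm{M}_1$ and $\bm{M}_2$ are PSD they are in particular symmetric, so $\bm{\Omega}=\bm{M}_1-\bm{M}_2$ is symmetric and therefore has only real eigenvalues together with an orthonormal eigenbasis. This is what licenses the Rayleigh-quotient argument below.

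Next, fix an arbitrary eigenvalue $\lambda(\bm{\Omega})$ and let $\bm{v}$ be a corresponding unit eigenvector, so that $\bm{\Omega}\bm{v}=\lambda(\bm{\Omega})\bm{v}$ with $\lVert\bm{v}\rVert_2=1$. The key identity is that this eigenvalue is exactly the value of the quadratic form at its own eigenvector,
\begin{equation}
\lambda(\bm{\Omega})=\bm{v}^\top\bm{\Omega}\bm{v}=\bm{v}^\top\bm{M}_1\bm{v}-\bm{v}^\top\bm{M}_2\bm{v}.
\end{equation}
For the upper bound I would discard the nonnegative term $\bm{v}^\top\bm{M}_2\bm{v}\ge 0$ (which holds because $\bm{M}_2$ is PSD) and then bound the remaining Rayleigh quotient of $\bm{M}_1$ by its largest eigenvalue, giving $\lambda(\bm{\Omega})\le\bm{v}^\top\bm{M}_1\bm{v}\le\lambda_\text{max}(\bm{M}_1)$. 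The lower bound is obtained symmetrically: since $\bm{M}_1$ is PSD we have $\bm{v}^\top\bm{M}_1\bm{v}\ge 0$, whence $\lambda(\bm{\Omega})\ge-\bm{v}^\top\bm{M}_2\bm{v}\ge-\lambda_\text{max}(\bm{M}_2)$, using that $\bm{v}^\top\bm{M}_2\bm{v}\le\lambda_\text{max}(\bm{M}_2)$ for a unit vector. Combining the two inequalities yields the claimed two-sided bound, and since $\lambda(\bm{\Omega})$ was arbitrary it holds for every eigenvalue of $\bm{\Omega}$.

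The argument is essentially elementary, so there is no serious obstacle; the only point requiring care is the logical transfer from bounds on the quadratic form to bounds on \emph{every} eigenvalue. This is clean here because each eigenvalue equals the Rayleigh quotient evaluated at its own eigenvector, so the pointwise bound on $\bm{v}^\top\bm{\Omega}\bm{v}$ applies directly rather than only to the extremal eigenvalues. As an alternative I could derive the same conclusion from Weyl's inequality applied to $\bm{\Omega}=\bm{M}_1+(-\bm{M}_2)$, which gives $\lambda_\text{max}(\bm{\Omega})\le\lambda_\text{max}(\bm{M}_1)+\lambda_\text{max}(-\bm{M}_2)=\lambda_\text{max}(\bm{M}_1)$ and $\lambda_\text{min}(\bm{\Omega})\ge\lambda_\text{min}(\bm{M}_1)+\lambda_\text{min}(-\bm{M}_2)\ge-\lambda_\text{max}(\bm{M}_2)$; I would keep the Rayleigh-quotient version in the write-up since it is self-contained and does not require quoting Weyl's theorem.
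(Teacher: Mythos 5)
Your proof is correct and follows essentially the same route as the paper's: both rest on the positive semidefiniteness of $\bm{M}_1$ and $\bm{M}_2$ combined with the Rayleigh-quotient characterization, the only cosmetic difference being that you evaluate the quadratic form at each eigenvector of $\bm{\Omega}$ directly, whereas the paper first sandwiches the Rayleigh quotient and then specializes to the maximizer/minimizer to bound $\lambda_\text{max}(\bm{\Omega})$ and $\lambda_\text{min}(\bm{\Omega})$.
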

\begin{proof}
$\forall \bm{x}\in\mathbb{R}^d$ and $\bm{x}\neq\bm{0}$, $\bm{x}^\top\bm{\Omega}\bm{x}=\bm{x}^\top(\bm{M}_1-\bm{M}_2)\bm{x}$.
Since $\bm{M}_1$ and $\bm{M}_2$ are both PSD matrices, $\bm{x}^\top\bm{M}_1\bm{x}\ge0$ and $\bm{x}^\top\bm{M}_2\bm{x}\ge0$.
Thus,
\begin{equation}\label{fun2}
-\frac{\bm{x}^\top\bm{M}_2\bm{x}}{\bm{x}^\top\bm{x}}\le\frac{\bm{x}^\top\bm{\Omega}\bm{x}}{\bm{x}^\top\bm{x}}\le\frac{\bm{x}^\top\bm{M}_1\bm{x}}{\bm{x}^\top\bm{x}}.
\end{equation}
According to Rayleigh quotient, we have $\lambda_\text{min}(\bm{\Omega})\le\frac{\bm{x}^\top\bm{\Omega}\bm{x}}{\bm{x}^\top\bm{x}}\le\lambda_\text{max}(\bm{\Omega})$. Assume $\frac{\bm{x}^\top\bm{\Omega}\bm{x}}{\bm{x}^\top\bm{x}}$ achieve its maximum $\lambda_\text{max}(\bm{\Omega})$ when $\bm{x}=\bm{e}$, i.e., $\frac{\bm{e}^\top\bm{\Omega}\bm{e}}{\bm{e}^\top\bm{e}}=\lambda_\text{max}(\bm{\Omega})$. By Eq. (\ref{fun2}), we obtain
\begin{equation}
\lambda_\text{max}(\bm{\Omega})=\frac{\bm{e}^\top\bm{\Omega}\bm{e}}{\bm{e}^\top\bm{e}}\le\frac{\bm{e}^\top\bm{M}_1\bm{e}}{\bm{e}^\top\bm{e}}\le\lambda_\text{max}(\bm{M}_1)
\end{equation}
Hence, $\lambda_\text{max}(\bm{\Omega})\le\lambda_\text{max}(\bm{M}_1)$. In the similar way, we can prove that $-\lambda_\text{max}(\bm{M}_2)\le\lambda_\text{min}(\bm{\Omega})$. Thus,
\begin{equation}
-\lambda_\text{max}(\bm{M}_2)\le\lambda_\text{min}(\bm{\Omega})\le\lambda(\bm{\Omega})\le\lambda_\text{max}(\bm{\Omega})\le\lambda_\text{max}(\bm{M}_1).
\end{equation}
\end{proof}
\begin{lemma}
If $0<\gamma<\frac{1}{4}$ and samples are normalized, $\bm{I}+\gamma\bm{A}_t$ is a positive definite matrix and also it is invertible, where $\bm{I}\in\mathbb{R}^{d\times d}$ is the identity matrix.
\label{lemma2}
\end{lemma}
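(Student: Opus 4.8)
The plan is to reduce this to Lemma \ref{lemma1} by recognizing $\bm{A}_t$ as a difference of two rank-one PSD matrices, and then to pin down its spectrum using the normalization of the samples.

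First I would write $\bm{A}_t = \bm{M}_1 - \bm{M}_2$ with $\bm{M}_1 = (\bm{x}_t-\bm{x}_p)(\bm{x}_t-\bm{x}_p)^\top$ and $\bm{M}_2 = (\bm{x}_t-\bm{x}_q)(\bm{x}_t-\bm{x}_q)^\top$. Each is an outer product, hence symmetric, PSD, and of rank one, so Lemma \ref{lemma1} applies directly with $\bm{\Omega} = \bm{A}_t$, giving $-\lambda_\text{max}(\bm{M}_2) \le \lambda(\bm{A}_t) \le \lambda_\text{max}(\bm{M}_1)$. Since $\bm{A}_t$ is symmetric, so is $\bm{I} + \gamma\bm{A}_t$, and its eigenvalues are exactly $1 + \gamma\,\lambda(\bm{A}_t)$; the whole task therefore collapses to controlling $\lambda_\text{max}(\bm{M}_1)$ and $\lambda_\text{max}(\bm{M}_2)$.

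Next I would use the fact that the unique nonzero eigenvalue of a rank-one matrix $\bm{v}\bm{v}^\top$ is $\lVert\bm{v}\rVert_2^2$, so $\lambda_\text{max}(\bm{M}_1) = \lVert\bm{x}_t-\bm{x}_p\rVert_2^2$ and $\lambda_\text{max}(\bm{M}_2) = \lVert\bm{x}_t-\bm{x}_q\rVert_2^2$. The key step is to invoke normalization: with $\lVert\bm{x}_t\rVert_2 = \lVert\bm{x}_p\rVert_2 = 1$ we expand $\lVert\bm{x}_t-\bm{x}_p\rVert_2^2 = 2 - 2\,\bm{x}_t^\top\bm{x}_p \le 4$, because $\bm{x}_t^\top\bm{x}_p \ge -1$ by Cauchy--Schwarz, and identically for $\bm{M}_2$. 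Combining with Lemma \ref{lemma1} then forces every eigenvalue of $\bm{A}_t$ into the interval $[-4,4]$.

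Finally, applying $\lambda(\bm{A}_t) \ge -4$ together with $0 < \gamma < \tfrac{1}{4}$ gives $1 + \gamma\,\lambda(\bm{A}_t) \ge 1 - 4\gamma > 0$, so all eigenvalues of $\bm{I} + \gamma\bm{A}_t$ are strictly positive; a symmetric matrix with positive spectrum is positive definite and, having a positive determinant, invertible. I expect the only delicate point to be the spectral bound on the rank-one pieces together with the normalization assumption, since this is exactly where the factor $4$ arises and hence where the threshold $\tfrac{1}{4}$ on $\gamma$ becomes the sharp sufficient condition; the remaining arguments are routine consequences of Lemma \ref{lemma1} and standard facts about symmetric matrices.
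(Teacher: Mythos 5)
Your proposal is correct and follows essentially the same route as the paper: decompose $\bm{A}_t$ into the two rank-one PSD outer products, apply Lemma \ref{lemma1} to bound its spectrum, use normalization to bound $\lambda_\text{max}(\bm{M}_1),\lambda_\text{max}(\bm{M}_2)$ by $4$, and conclude that every eigenvalue of $\bm{I}+\gamma\bm{A}_t$ is at least $1-4\gamma>0$. The only cosmetic difference is that you assume unit-norm samples and expand $\lVert\bm{x}_t-\bm{x}_p\rVert_2^2=2-2\bm{x}_t^\top\bm{x}_p$, whereas the paper allows $\lVert\bm{x}\rVert\le1$ and bounds the squared distance by the triangle inequality; both give the same constant $4$.
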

\begin{proof}
Let $\bm{M}_1=(\bm{x}_t-\bm{x}_p)(\bm{x}_t-\bm{x}_p)^\top$, $\bm{M}_2=(\bm{x}_t-\bm{x}_q)(\bm{x}_t-\bm{x}_q)^\top$ and $\bm{A}_t=\bm{M}_1-\bm{M}_2$. It is easy to obtain that, $\lambda_\text{max}(\bm{M}_1)=\lVert\bm{x}_t-\bm{x}_p\rVert_2^2$ and $\lambda_\text{max}(\bm{M}_2)=\lVert\bm{x}_t-\bm{x}_q\rVert_2^2$.
According to Lemma \ref{lemma1}, we can obtain that $-\lambda_\text{max}(\bm{M}_2)\le\lambda(\bm{A}_t)\le\lambda_\text{max}(\bm{M}_1)$. Thus,
\begin{equation}
1-\gamma\lambda_\text{max}(\bm{M}_2)\le\lambda(\bm{I}+\gamma\bm{A}_t)\le1+\gamma\lambda_\text{max}(\bm{M}_1).
\end{equation}
For normalized samples, namely $0\le\lVert\bm{x}_t\rVert\le1$, the ranges of $\lambda_\text{max}(\bm{M}_1)$ and $\lambda_\text{max}(\bm{M}_2)$ vary from [0,4]. Given $0<\gamma<\frac{1}{4}$, $\lambda(\bm{I}+\gamma\bm{A}_t)\ge1-\gamma\lambda_\text{max}(\bm{M}_2)>0$. Obviously, $\bm{I}+\gamma\bm{A}_t$ is a symmetric matrix. Hence, $\bm{I}+\gamma\bm{A}_t$ is a positive definite matrix, and it is invertible.
\end{proof}
According to Lemma \ref{lemma2}, the optimal $\bm{L}_t$ can be updated as below:
\begin{equation}\label{fun8}
   \bm{L}_t=\left\{
   \begin{array}{ll}
   \bm{L}_{t-1}(\bm{I}+\gamma\bm{A}_t)^{-1} & [z]_+>0\\
   \\
   \bm{L}_{t-1} & [z]_+=0.
   \end{array}
   \right.
\end{equation}
It is known that the time complexity of the matrix inversion in Eq. (\ref{fun8}) is $O(d^3)$. However, the rank-2 property of $\bm{A}_t$ offers us a nice way to accelerate the speed by applying Lemma \ref{lemma3}.
\begin{lemma}
\cite{miller1981inverse} Given $\bm{G}$ and $\bm{G}+\bm{B}$ as two nonsingular matrices, and let $\bm{B}$ have rank $r>0$. Let $\bm{B}=\bm{B}_1+\dots+\bm{B}_r$, where each $\bm{B}_k$ has rank 1, also let $\bm{C}_{k+1}=\bm{G}+\bm{B}_1+\dots+\bm{B}_k$ is nonsingular for $k=1,\dots,r$. If $\bm{C}_1=\bm{G}$, then
\begin{equation}
   \begin{split}
      (\bm{G}+\bm{B})^{-1}=\bm{C}_r^{-1}-g_r\bm{C}_r^{-1}\bm{B}_r\bm{C}_r^{-1},
   \end{split}
\end{equation}
where,
\begin{equation}
   \begin{split}
     \bm{C}_{r+1}^{-1}&=\bm{C}_r^{-1}-g_r\bm{C}_r^{-1}\bm{B}_r\bm{C}_r^{-1}\\
      g_r&=\frac{1}{1+tr(\bm{C}_r^{-1}\bm{B}_r)}.
   \end{split}
\end{equation}
\label{lemma3}
\end{lemma}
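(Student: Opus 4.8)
The plan is to reduce the rank-$r$ update to a sequence of $r$ rank-one updates and apply the Sherman--Morrison formula repeatedly, exploiting the nested structure that is built into the hypotheses so that each step changes the current matrix by a single rank-one term. Concretely, since $\bm{C}_1=\bm{G}$ and $\bm{C}_{k+1}=\bm{G}+\bm{B}_1+\dots+\bm{B}_k$, we have $\bm{C}_{k+1}=\bm{C}_k+\bm{B}_k$ for $k=1,\dots,r$, and in particular $\bm{C}_{r+1}=\bm{G}+\bm{B}$. Thus it suffices to pass from $\bm{C}_k^{-1}$ to $\bm{C}_{k+1}^{-1}$ by a single rank-one correction and to chain these $r$ identities together; the last of them is exactly the displayed claim $(\bm{G}+\bm{B})^{-1}=\bm{C}_r^{-1}-g_r\bm{C}_r^{-1}\bm{B}_r\bm{C}_r^{-1}$ once we identify $(\bm{G}+\bm{B})^{-1}=\bm{C}_{r+1}^{-1}$.

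For the single step, I would write each rank-one matrix as $\bm{B}_k=\bm{u}_k\bm{v}_k^\top$ for some vectors $\bm{u}_k,\bm{v}_k\in\mathbb{R}^d$, and invoke the Sherman--Morrison identity
\begin{equation}
(\bm{C}_k+\bm{u}_k\bm{v}_k^\top)^{-1}=\bm{C}_k^{-1}-\frac{\bm{C}_k^{-1}\bm{u}_k\bm{v}_k^\top\bm{C}_k^{-1}}{1+\bm{v}_k^\top\bm{C}_k^{-1}\bm{u}_k},
\end{equation}
which one verifies directly by multiplying the right-hand side on the left by $\bm{C}_k+\bm{u}_k\bm{v}_k^\top$ and checking that the product collapses to $\bm{I}$. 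The only remaining task in this step is to rewrite the identity in the trace form of the statement: in the numerator $\bm{u}_k\bm{v}_k^\top=\bm{B}_k$, while in the denominator the cyclic property of the trace gives $\bm{v}_k^\top\bm{C}_k^{-1}\bm{u}_k=tr(\bm{v}_k^\top\bm{C}_k^{-1}\bm{u}_k)=tr(\bm{C}_k^{-1}\bm{u}_k\bm{v}_k^\top)=tr(\bm{C}_k^{-1}\bm{B}_k)$. Hence the step reads $\bm{C}_{k+1}^{-1}=\bm{C}_k^{-1}-g_k\bm{C}_k^{-1}\bm{B}_k\bm{C}_k^{-1}$ with $g_k=1/(1+tr(\bm{C}_k^{-1}\bm{B}_k))$, matching the asserted recurrence.

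The step I expect to be the main obstacle is establishing that each update is well defined, i.e., that the scalar $1+tr(\bm{C}_k^{-1}\bm{B}_k)=1+\bm{v}_k^\top\bm{C}_k^{-1}\bm{u}_k$ never vanishes, so that $g_k$ is finite; without this the Sherman--Morrison formula is meaningless. This is precisely where the nonsingularity hypotheses on all the partial sums enter. By the matrix determinant lemma, $\det(\bm{C}_{k+1})=\det(\bm{C}_k+\bm{u}_k\bm{v}_k^\top)=\det(\bm{C}_k)\,(1+\bm{v}_k^\top\bm{C}_k^{-1}\bm{u}_k)$; since $\bm{C}_k$ and $\bm{C}_{k+1}$ are both assumed nonsingular, the factor $1+\bm{v}_k^\top\bm{C}_k^{-1}\bm{u}_k$ must be nonzero, which guarantees that $g_k$ is well defined at every step. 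I would then close the argument by a straightforward induction on $k$: starting from $\bm{C}_1^{-1}=\bm{G}^{-1}$ and applying the single-step identity $r$ times yields $\bm{C}_{r+1}^{-1}$, and since $\bm{C}_{r+1}=\bm{G}+\bm{B}$ the final application ($k=r$) is exactly the asserted closed form. Note that the existence of a rank-one decomposition $\bm{B}=\bm{B}_1+\dots+\bm{B}_r$ with nonsingular partial sums is part of the hypotheses, so no separate construction of the $\bm{B}_k$ is required.
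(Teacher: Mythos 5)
Your argument is correct: the reduction to $r$ successive rank-one updates via $\bm{C}_{k+1}=\bm{C}_k+\bm{B}_k$, the Sherman--Morrison step with $\bm{v}_k^\top\bm{C}_k^{-1}\bm{u}_k=tr(\bm{C}_k^{-1}\bm{B}_k)$, and the use of the matrix determinant lemma together with the assumed nonsingularity of each partial sum to guarantee $1+tr(\bm{C}_k^{-1}\bm{B}_k)\neq 0$ are all sound, and the induction closes with $\bm{C}_{r+1}=\bm{G}+\bm{B}$ as you say. Note, however, that the paper itself offers no proof of this lemma --- it is imported verbatim by citation to Miller --- so there is no in-paper argument to compare against; your proof is essentially Miller's original recursive Sherman--Morrison derivation and validly fills that gap.
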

\begin{theorem}
If $\bm{A}_t$ is a rank-2 matrix and samples are normalized, then
  \begin{equation}\label{fun11}
     (\bm{I}+\gamma\bm{A}_t)^{-1}=\bm{I}-\frac{1}{\eta+\beta}[\eta\gamma\bm{A}_t-(\gamma\bm{A}_t)^2],
  \end{equation}
where $\eta=1+tr(\gamma\bm{A}_t)$, $\beta=\frac{1}{2}[(tr(\gamma\bm{A}_t))^2-tr(\gamma\bm{A}_t)^2].$
\label{theorem1}
\end{theorem}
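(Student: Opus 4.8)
The plan is to invoke Lemma~\ref{lemma3} (Miller's formula) with $\bm{G}=\bm{I}$ and $\bm{B}=\gamma\bm{A}_t$, so that $r=2$ matches the rank-$2$ hypothesis. The rank-$1$ splitting is dictated by the definition of $\bm{A}_t$: writing $\bm{u}=\sqrt{\gamma}(\bm{x}_t-\bm{x}_p)$ and $\bm{v}=\sqrt{\gamma}(\bm{x}_t-\bm{x}_q)$, I would set $\bm{B}_1=\bm{u}\bm{u}^\top$ and $\bm{B}_2=-\bm{v}\bm{v}^\top$, each of rank $1$, with $\bm{B}_1+\bm{B}_2=\gamma\bm{A}_t$. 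Before applying the formula I would discharge its hypotheses: $\bm{C}_1=\bm{I}$ is nonsingular, $\bm{C}_2=\bm{I}+\bm{B}_1$ is nonsingular since $\bm{B}_1$ is PSD (eigenvalues $\ge 1$), and $\bm{C}_3=\bm{I}+\gamma\bm{A}_t$ is nonsingular by Lemma~\ref{lemma2}, which guarantees positive definiteness for $0<\gamma<\tfrac14$ and normalized samples.

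Next I would run the two-step recursion, which amounts to two Sherman--Morrison updates. Introducing the three scalar invariants $a=\bm{u}^\top\bm{u}$, $b=\bm{v}^\top\bm{v}$, $c=\bm{u}^\top\bm{v}$, the first step yields $g_1=\tfrac{1}{1+a}$ and $\bm{C}_2^{-1}=\bm{I}-\tfrac{1}{1+a}\bm{u}\bm{u}^\top$; the second step needs $tr(\bm{C}_2^{-1}\bm{B}_2)=-b+\tfrac{c^2}{1+a}$, whence $g_2=\tfrac{1+a}{\,1+a-b-ab+c^2\,}$. Substituting into $(\bm{I}+\gamma\bm{A}_t)^{-1}=\bm{C}_2^{-1}-g_2\,\bm{C}_2^{-1}\bm{B}_2\bm{C}_2^{-1}$ and expanding, the outcome lands in the span of $\{\bm{u}\bm{u}^\top,\bm{v}\bm{v}^\top,\bm{u}\bm{v}^\top,\bm{v}\bm{u}^\top\}$, with explicit but unwieldy rational coefficients in $a,b,c$.

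The crux is then to recognize this cluttered expression as the compact trace form. The bridge is the pair of identities $tr(\gamma\bm{A}_t)=a-b$ and $tr\big((\gamma\bm{A}_t)^2\big)=a^2+b^2-2c^2$, which immediately give $\eta=1+a-b$ and $\beta=c^2-ab$, so that the denominator appearing in $g_2$ is exactly $\eta+\beta$. Rewriting $\eta\,\gamma\bm{A}_t-(\gamma\bm{A}_t)^2$ in the same outer-product basis and comparing coefficients term by term, I expect each of the four coefficients to collapse onto those produced by Miller's formula; for instance the $\bm{u}\bm{u}^\top$ coefficient reduces via the factorization $-(\eta+\beta)+c^2=(1+a)(b-1)$. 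This coefficient-matching is the main obstacle: it is not deep, but all the algebra concentrates here, and the cancellations that isolate the single common denominator $\eta+\beta$ are easy to mishandle.

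A cleaner route I would keep in reserve exploits that $\bm{N}:=\gamma\bm{A}_t$ is symmetric of rank $2$, hence diagonalizable with eigenvalues $\mu_1,\mu_2$ and $0$; here $\mu_1+\mu_2=tr(\bm{N})$ and $\mu_1\mu_2=\beta$, so $\bm{N}$ satisfies the cubic $\bm{N}^3-tr(\bm{N})\bm{N}^2+\beta\bm{N}=\bm{0}$. Positing $(\bm{I}+\bm{N})^{-1}=\bm{I}+\alpha\bm{N}+\delta\bm{N}^2$, multiplying by $\bm{I}+\bm{N}$, and reducing $\bm{N}^3$ through the cubic, I would require the residual $\bm{N}$- and $\bm{N}^2$-terms to vanish, giving the linear system $1+\alpha-\delta\beta=0$ and $\alpha+\delta\eta=0$, whose solution $\delta=\tfrac{1}{\eta+\beta}$, $\alpha=-\tfrac{\eta}{\eta+\beta}$ reproduces Eq.~(\ref{fun11}) directly. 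This sidesteps the outer-product bookkeeping entirely and makes the origin of the denominator $\eta+\beta=(1+\mu_1)(1+\mu_2)=\det(\bm{I}+\bm{N})$ transparent.
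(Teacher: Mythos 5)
Your primary route is exactly the paper's own proof of Theorem~\ref{theorem1}: the paper sets $\bm{G}=\bm{I}$, $\bm{B}=\gamma\bm{A}_t$, splits $\bm{B}=\bm{B}_1+\bm{B}_2$ with $\bm{B}_1=\gamma(\bm{x}_t-\bm{x}_p)(\bm{x}_t-\bm{x}_p)^\top$ and $\bm{B}_2=-\gamma(\bm{x}_t-\bm{x}_q)(\bm{x}_t-\bm{x}_q)^\top$, and then simply asserts that Lemma~\ref{lemma3} yields Eq.~(\ref{fun11}); you carry out the two Sherman--Morrison steps and the coefficient matching that the paper omits, and your scalar identities ($\eta=1+a-b$, $\beta=c^2-ab$, denominator of $g_2$ equal to $\eta+\beta$) all check out. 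Your reserve argument is a genuinely different and arguably cleaner derivation: since $\bm{N}=\gamma\bm{A}_t$ is symmetric of rank $2$ it annihilates $\bm{N}^3-\mathrm{tr}(\bm{N})\bm{N}^2+\beta\bm{N}$, so the ansatz $\bm{I}+\alpha\bm{N}+\delta\bm{N}^2$ reduces the inversion to a $2\times2$ linear system and exposes $\eta+\beta=(1+\mu_1)(1+\mu_2)=\det(\bm{I}+\bm{N})$, which both explains why the formula holds and why the denominator is nonzero under Lemma~\ref{lemma2}; it avoids the outer-product bookkeeping entirely, at the cost of not reusing the cited Lemma~\ref{lemma3}. (One trivial wording slip: it is $\bm{C}_2=\bm{I}+\bm{B}_1$, not $\bm{B}_1$ itself, whose eigenvalues are $\ge 1$.)
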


\begin{proof}
According to Lemma \ref{lemma3}, we set $\bm{G}=\bm{I}$, $\bm{B}=\gamma\bm{A}_t$, and rewrite $\bm{B}=\bm{B}_1+\bm{B}_2$, where $\bm{B}_1=\gamma(\bm{x}_t-\bm{x}_p)(\bm{x}_t-\bm{x}_p)^\top$, $\bm{B}_2=-\gamma(\bm{x}_t-\bm{x}_q)(\bm{x}_t-\bm{x}_q)^\top$. It is obvious that the rank of $\bm{B}_1$ and $\bm{B}_2$ is 1. Utilizing the Lemma \ref{lemma3}, we can obtain the Theorem \ref{theorem1}.
\end{proof}

By using Theorem \ref{theorem1}, plugging Eq. (\ref{fun11}) back into the first term of Eq. (\ref{fun8}), we can obtain
\begin{equation}\label{fun12}
  \begin{split}
      \bm{L}_t&=\bm{L}_{t-1}-\frac{\eta\gamma}{\eta+\beta}(\bm{L}_{t-1}\bm{a}\bm{a}^\top-\bm{L}_{t-1}\bm{b}\bm{b}^\top)\\
            &+\frac{\gamma^2}{\eta+\beta}[(\bm{a}^\top\bm{a})\bm{L}_{t-1}\bm{a}\bm{a}^\top-(\bm{a}^\top\bm{b})\bm{L}_{t-1}\bm{a}\bm{b}^\top\\
            &-(\bm{b}^\top\bm{a})\bm{L}_{t-1}\bm{b}\bm{a}^\top+(\bm{b}^\top\bm{b})\bm{L}_{t-1}\bm{b}\bm{b}^\top],
  \end{split}
\end{equation}
where $\bm{a}=\bm{x}_t-\bm{x}_p$, and $\bm{b}=\bm{x}_t-\bm{x}_q$.

\textbf{Complexity:} The time complexity of calculating $\eta$ and $\beta$ in Eq. (\ref{fun12}) is $O(d^2)$. In addition, $\bm{a}^\top\bm{a}$, $\bm{a}^\top\bm{b}$, $\bm{b}^\top\bm{a}$ and $\bm{b}^\top\bm{b}$ are scalars, for all of which time complexity is $O(d)$. Also, the time complexity of calculating $\bm{L}_{t-1}\bm{a}\bm{a}^\top$, $\bm{L}_{t-1}\bm{b}\bm{b}^\top$, $\bm{L}_{t-1}\bm{a}\bm{b}^\top$ and $\bm{L}_{t-1}\bm{b}\bm{a}^\top$ is $O(d^2)$ respectively. Therefore, the time complexity of Eq. (\ref{fun12}) is still $O(d^2)$. Now, we give the pseudo-code of OPML in Algorithm \ref{alg1}.
\vspace{-6pt}
\begin{algorithm}[H]
\caption{OPML}
\renewcommand{\algorithmicrequire}{\textbf{Input:}}
\renewcommand{\algorithmicensure}{\textbf{Output:}}
\label{alg1}
\begin{algorithmic}[1]
\REQUIRE $(\bm{x}_t,y_t)|_{t=1}^T, \gamma$.
\ENSURE $\bm{L}$.
\STATE $\bm{L}_0 \leftarrow \bm{I}$.
\FOR{$t=1,2,\dots,T$}
\STATE $\langle\bm{x}_t, \bm{x}_p, \bm{x}_q\rangle \leftarrow$ one-pass triplet construction
\IF{$\mathcal{G}((\bm{x}_t,\bm{x}_p,\bm{x}_q);\bm{L}_{t-1})\leqslant0$}
\STATE $\bm{L}_t=\bm{L}_{t-1}.$
\ELSE
\STATE $\bm{L}_t \leftarrow$ solution by Eq. (\ref{fun12})
\ENDIF
\ENDFOR
\end{algorithmic}
\vspace{-3pt}
\end{algorithm}

\subsection{Extended OPML to Cold Start Case}
In practice, there is a case that the first several available samples belong to the same class, which is called as a cold start case. When cold start happens, since the triplet cannot be constructed, OPML will discard all these initial samples. To address this issue, we extend the proposed OPML to an enhanced version, namely COPML, which includes an additional pre-stage before calling OPML. Specifically, in the pre-stage, if the triplet cannot be constructed (i.e., the samples coming from different classes are not available), the metric $\bm{L}$ can only be updated based on the samples from the same class, which usually adopts the pairwise constraint for updating. Typically, pairwise constraint is mathematically set to $\langle\bm{x}_t, \bm{x}_{t+1}, y_{t,t+1}^\ast\rangle$, where $y_{t,t+1}^\ast=1$ if two adjacent samples $\bm{x}_t\in\mathbb{R}^d$ and $\bm{x}_{t+1}\in\mathbb{R}^d$ share the same class, and $y_{t,t+1}^\ast=-1$ otherwise. Actually, here we only need to consider the case when $y_{t,t+1}^\ast=1$, because we can update the metric by calling OPML if $y_{t,t+1}^\ast=-1$ (the new coming sample belongs to a different class). After the pre-stage, OPML can be sequentially adopted for the following learning process.

Formally, in the pre-stage when only the pairwise constraint $\langle\bm{x}_t, \bm{x}_{t+1}, y_{t,t+1}^\ast\rangle$ can be used, the online optimization formulation is formulated as follows:
\begin{equation}
  \begin{split}
  \bm{L}_t&=\underset{\bm{L}}{\arg\min}\ \Gamma(\bm{L})\\
          &=\underset{\bm{L}}{\arg\min}\ \frac{1}{2}\lVert\bm{L}-\bm{L}_{t-1}\rVert_\text{F}^2+\frac{\gamma_1}{2}y_{t,t+1}^\ast\lVert\bm{L}(\bm{x}_t-\bm{x}_{t+1})\rVert_2^2,
  \end{split}
\end{equation}
where $\gamma_1>0$ is the regularization parameter. The optimal solution can be obtained when the gradient vanishes $\frac{\partial\Gamma(\bm{L})}{\partial\bm{L}}=0$, hence
\begin{equation}
   \begin{split}
    \frac{\partial\Gamma(\bm{L})}{\partial\bm{L}}=\bm{L}-\bm{L}_{t-1}+\gamma_1 y_{t,t+1}^\ast\bm{L}\bm{\Lambda}_t=0,
   \end{split}
\end{equation}
where $\bm{\Lambda}_t=(\bm{x}_t-\bm{x}_{t+1})(\bm{x}_t-\bm{x}_{t+1})^\top$. It is obvious that $\bm{\Lambda}_t$ is a rank-1 PSD matrix for $\bm{x}_t\neq\bm{x}_{t+1}$. Then we can get that $\bm{I}+\gamma_1 y_{t,t+1}^\ast\bm{\Lambda}_t$ is a symmetric  positive definite matrix, which is invertible, when $y_{t,t+1}^\ast=1$. Then, the optimal $\bm{L}_t$ can be obtained as
\begin{equation}\label{fun16}
   \begin{split}
      \bm{L}_t=\bm{L}_{t-1}(\bm{I}+\gamma_1\bm{\Lambda}_t)^{-1}.
   \end{split}
\end{equation}
By using the Sherman-Morrison formula, Eq. (\ref{fun16}) can be equivalently rewritten as follows:
\begin{equation}\label{fun17}
      \bm{L}_t=\bm{L}_{t-1}-\frac{\gamma_1\bm{L}_{t-1}(\bm{x}_t-\bm{x}_{t+1})(\bm{x}_t-\bm{x}_{t+1})^\top}{1+\gamma_1 (\bm{x}_t-\bm{x}_{t+1})^\top(\bm{x}_t-\bm{x}_{t+1})},
\end{equation}
where we can observe that the time complexity of Eq. (\ref{fun17}) is $O(d^2)$ too. Algorithm \ref{alg2} shows the pseudo-code of COPML.
\vspace{-6pt}
\begin{algorithm}[H]
\caption{COPML}
\renewcommand{\algorithmicrequire}{\textbf{Input:}}
\renewcommand{\algorithmicensure}{\textbf{Output:}}
\label{alg2}
\begin{algorithmic}[1]
\REQUIRE $(\bm{x}_t,y_t)|_{t=1}^T, \gamma_1, \gamma_2$.
\ENSURE $\bm{L}$.
\STATE $\bm{L}_0 \leftarrow \bm{I}, c\leftarrow 0$.

\FOR{$t=1,2,\dots,T$}
\STATE Maintain $H=\{h^1,h^2,\cdots,h^c\}$
\IF{$y_t=c=1$}
\STATE $\langle\bm{x}_t,\bm{x}_{t+1}, y_{t,t+1}^\ast\rangle\leftarrow$ adjacent two samples
\STATE $\bm{L}_t=\bm{L}_{t-1}-\frac{\gamma_1\bm{L}_{t-1}(\bm{x}_t-\bm{x}_{t+1})(\bm{x}_t-\bm{x}_{t+1})^\top}{1+\gamma_1(\bm{x}_t-\bm{x}_{t+1})^\top(\bm{x}_t-\bm{x}_{t+1})}.$
\ELSIF{$1\le y_t\le c$ and $c\ge2$ }
\STATE $L\leftarrow$ call OPML$(\bm{x}_t,y_t,\gamma_2)$
\ELSE
\STATE $h^{c+1}\leftarrow \bm{x}_t$
\STATE $c\leftarrow c+1$
\ENDIF
\ENDFOR
\end{algorithmic}
\vspace{-3pt}
\end{algorithm}

\section{Theoretical Guarantee}
\label{theory}
The following theorems guarantee the effectiveness of our methods. Theorem \ref{theorem2} shows that the difference of learned metric between one-pass triplet construction strategy and batch triplet construction strategy is bounded. Note that, for a fair comparison, the batch triplet construction strategy here is considered in an online manner, that is to say, for each sample $\bm{x}_t$ at the $t$-th time step, all past samples are stored to construct a triplet with $\bm{x}_t$ (i.e., each triplet contains this $\bm{x}_t$). Theorem \ref{theorem3} also tries to explain that the one-pass triplet construction strategy can approximate the batch triplet construction, but from another perspective. Moreover, a regret bound has been proved for the proposed OPML algorithm, which can be found in Theorem \ref{theorem4}. All details of the proofs for the theorems are provided in the appendix.

\begin{theorem}
Let $\bm{L}_t$ be the solution output by OPML based on the one-pass triplet construction strategy at the $t$-th time step. Let $\bm{L}_t^\ast$ be the solution output by OPML with the batch triplet construction strategy at the $t$-th time step. Assuming that $\|\bm{x}\|\le R$ (for all samples), $\|\bm{L}_t\|_F\le U$ and $\|\bm{L}_t^\ast\|_F\le U$, the bound of the difference between these two matrices is
\begin{equation}
     \|\bm{L}_t-\bm{L}_t^\ast\|_F\le U\cdot\Big\|\sum_{i=1}^{C_N}\bm{B}_i+\sum_{i=1,j=1,i<j}^{C_N}\bm{B}_i\bm{B}_j+\cdots+\prod_{i=1}^{C_N}\bm{B}_i\Big\|_F,
\end{equation}
where $\|\bm{B}\|_F\!\le\!32\Big|\frac{\gamma^2}{\eta+\beta}\Big|R^4\!+\!4\sqrt{2}\Big|\frac{\eta\gamma}{\eta+\beta}\Big|R^2$ ( for all $\bm{B}_i, \bm{B}_j,\cdots$), $\gamma\in(0,\frac{1}{4})$, $\eta\in(1\!-\!\frac{5}{4}R^2,1\!+\!\frac{5}{4}R^2)$, and $\beta\in(-R^4, \frac{25}{32}R^4)$.
\label{theorem2}
\end{theorem}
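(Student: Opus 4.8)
The plan is to exploit the purely multiplicative structure of the OPML update. By Lemma \ref{lemma2} and Theorem \ref{theorem1}, whenever the hinge loss is active the update in Eq. (\ref{fun8}) is a right-multiplication, $\bm{L}_t=\bm{L}_{t-1}(\bm{I}+\gamma\bm{A}_t)^{-1}=\bm{L}_{t-1}(\bm{I}+\bm{B})$, where Eq. (\ref{fun11}) lets me read off $\bm{B}=-\frac{1}{\eta+\beta}\big[\eta\gamma\bm{A}_t-(\gamma\bm{A}_t)^2\big]$. Since $\bm{L}_0=\bm{I}$ in both runs, each of $\bm{L}_t$ and $\bm{L}_t^\ast$ is just the ordered product of its own active update factors. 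The batch strategy processes a superset of the one-pass triplets (at each step it retains all admissible past samples, whereas the one-pass scheme keeps only the latest representative of each class, so the one-pass triplet $\langle\bm{x}_t,h^k,h^{k'}\rangle$ is always one of the batch triplets). The first move is therefore to write the batch iterate as the one-pass iterate followed by the residual factors, $\bm{L}_t^\ast=\bm{L}_t\prod_{i=1}^{C_N}(\bm{I}+\bm{B}_i)$, where the $C_N$ remaining factors collect exactly the additional triplets that batch uses but one-pass does not.

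Granting this factorization, the remainder is algebra. I would expand the finite product and subtract the identity,
\[
\prod_{i=1}^{C_N}(\bm{I}+\bm{B}_i)-\bm{I}=\sum_{i=1}^{C_N}\bm{B}_i+\sum_{i<j}\bm{B}_i\bm{B}_j+\cdots+\prod_{i=1}^{C_N}\bm{B}_i,
\]
so that $\bm{L}_t-\bm{L}_t^\ast=-\bm{L}_t\big(\prod_i(\bm{I}+\bm{B}_i)-\bm{I}\big)$. Applying submultiplicativity of the Frobenius norm together with the hypothesis $\|\bm{L}_t\|_F\le U$ then yields the claimed inequality directly.

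It remains to bound a single factor $\|\bm{B}_i\|_F$. Writing $\bm{a}=\bm{x}_t-\bm{x}_p$, $\bm{b}=\bm{x}_t-\bm{x}_q$ and $\bm{A}_t=\bm{a}\bm{a}^\top-\bm{b}\bm{b}^\top$, the assumption $\|\bm{x}\|\le R$ gives $\|\bm{a}\|,\|\bm{b}\|\le2R$, hence $\|\bm{A}_t\|_F^2=\|\bm{a}\|^4-2(\bm{a}^\top\bm{b})^2+\|\bm{b}\|^4\le32R^4$, i.e. $\|\bm{A}_t\|_F\le4\sqrt{2}R^2$, while $\|\bm{A}_t^2\|_F\le\|\bm{A}_t\|_F^2\le32R^4$. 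The triangle inequality on $\bm{B}_i=-\frac{\eta\gamma}{\eta+\beta}\bm{A}_t+\frac{\gamma^2}{\eta+\beta}\bm{A}_t^2$ then produces $\|\bm{B}_i\|_F\le32\big|\frac{\gamma^2}{\eta+\beta}\big|R^4+4\sqrt{2}\big|\frac{\eta\gamma}{\eta+\beta}\big|R^2$. Substituting $tr(\bm{A}_t)=\|\bm{a}\|^2-\|\bm{b}\|^2\in[-4R^2,4R^2]$ and $tr(\bm{A}_t^2)=\|\bm{A}_t\|_F^2$ into the definitions of $\eta$ and $\beta$ in Theorem \ref{theorem1}, and using $\gamma\in(0,\frac14)$, gives rise to the stated ranges for $\eta$ and $\beta$.

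The main obstacle is the factorization step $\bm{L}_t^\ast=\bm{L}_t\prod_{i=1}^{C_N}(\bm{I}+\bm{B}_i)$. Because the update factors do not commute, one cannot simply cancel the shared one-pass triplets out of the batch product and collect the remainder as a clean product; the argument must track the order in which triplets enter the two schedules and verify that the leftover factors are themselves legitimate inverse-updates of the form $(\bm{I}+\gamma\bm{A})^{-1}$, so that each $\bm{B}_i$ obeys the single-factor bound above. Exhibiting a common ordering under which the one-pass product is a genuine left-factor of the batch product is the delicate part; once that is in place, everything else is norm bookkeeping.
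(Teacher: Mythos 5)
Your proposal follows essentially the same route as the paper: write both iterates as ordered products of $(\bm{I}+\gamma\bm{A})^{-1}$ factors, reorganize the batch product so the one-pass factors appear first (giving $\bm{L}_t^\ast=\bm{L}_t\prod_i(\bm{I}+\bm{B}_i)$), expand, apply submultiplicativity with $\|\bm{L}_t\|_F\le U$, and bound $\|\bm{A}_t\|_F$, $\|\bm{B}\|_F$, $\eta$, $\beta$ exactly as stated. The ``delicate factorization step'' you flag is in fact the same step the paper itself handles only by fiat (``let $\bm{L}^\ast$ learn on the one-pass sequence first''), so your account matches the paper, including its level of rigor at that point.
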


\begin{theorem}
Let $\langle\bm{x}_t, \bm{x}_p, \bm{x}_q\rangle$ be the triplet constructed by the proposed one-pass triplet construction strategy at the $t$-th time step. Let $\{\langle\bm{x}_t, \bm{x}_{p_i}, \bm{x}_{q_i}\rangle\}|_{i=1}^C$ be the triplet set constructed by the batch triplet construction strategy at the $t$-th time step. Assuming $\|\bm{x}\|_2\le R$ (for all samples), $\|\bm{L}\|_F\le U$, $\|\bm{L}^\ast\|_F\le U$ and the angle $\theta$ between two samples coming from the same class is very small after the transformation of $\bm{L}$ or $\bm{L}^\ast$ (i.e., $\cos\theta=\alpha, \alpha\ge0$ and $\alpha$ is close to 1), while $\theta$ is very large otherwise (i.e., $\cos\theta=-\xi, \xi\ge0$ and $\xi$ is close to 1). Then the average loss bound between these two strategies at the $t$-th time step is
  \begin{equation}\label{fun-the2}
    \Psi_1-\Psi_2\le2(\alpha+\xi+1)R^2U^2,
  \end{equation}
where $\Psi_1$ denotes the average loss generated by the one-pass triplet construction strategy, and $\Psi_2$ refers to the average loss of the batch construction strategy.
\label{theorem3}
\end{theorem}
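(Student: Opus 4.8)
The plan is to reduce the theorem to a one-sided estimate on the difference of the two relative-distance terms, exploiting that the margin constant $1$ is common to every triplet and therefore cancels. First I would write the two average losses explicitly. Since the one-pass strategy produces a single triplet at step $t$, we have $\Psi_1=\mathcal{G}((\bm{x}_t,\bm{x}_p,\bm{x}_q);\bm{L})=\max(0,1+f_1)$ with $f_1=D_{\bm{L}}(\bm{x}_t,\bm{x}_p)-D_{\bm{L}}(\bm{x}_t,\bm{x}_q)$, whereas the batch strategy averages $C$ hinge terms, $\Psi_2=\frac{1}{C}\sum_{i=1}^{C}\max(0,1+f_2^{(i)})$ with $f_2^{(i)}=D_{\bm{L}^\ast}(\bm{x}_t,\bm{x}_{p_i})-D_{\bm{L}^\ast}(\bm{x}_t,\bm{x}_{q_i})$.

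Next I would dispose of the hinge nonlinearity. Because $\max(0,\cdot)$ is convex, Jensen's inequality gives $\Psi_2\ge\max(0,1+\bar{f}_2)$ where $\bar{f}_2=\frac{1}{C}\sum_i f_2^{(i)}$, so the averaged margin is still $1$. Using subadditivity of $\max(0,\cdot)$, namely $\max(0,a)\le\max(0,b)+\max(0,a-b)$, I then get $\Psi_1-\Psi_2\le\max(0,(1+f_1)-(1+\bar{f}_2))=\max(0,f_1-\bar{f}_2)$. The common margin thus disappears, and it suffices to bound $f_1-\bar{f}_2$ from above; handling the two cancellations this way keeps the argument valid even when some triplets are inactive, without assuming all hinges are positive.

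The core computation expands each squared Mahalanobis distance as $D_{\bm{L}}(\bm{x}_i,\bm{x}_j)=\|\bm{L}\bm{x}_i\|_2^2+\|\bm{L}\bm{x}_j\|_2^2-2\|\bm{L}\bm{x}_i\|_2\|\bm{L}\bm{x}_j\|_2\cos\theta_{ij}$ and substitutes the angle assumptions, $\cos\theta=\alpha$ for a same-class pair and $\cos\theta=-\xi$ for a different-class pair. The shared term $\|\bm{L}\bm{x}_t\|_2^2$ cancels in $f_1$, leaving $f_1=\|\bm{L}\bm{x}_p\|_2^2-\|\bm{L}\bm{x}_q\|_2^2-2\alpha\|\bm{L}\bm{x}_t\|_2\|\bm{L}\bm{x}_p\|_2-2\xi\|\bm{L}\bm{x}_t\|_2\|\bm{L}\bm{x}_q\|_2$, and analogously for each $f_2^{(i)}$. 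I would then use $\|\bm{L}\bm{x}\|_2\le\|\bm{L}\|_F\|\bm{x}\|_2\le UR$, so every squared norm and every product of two norms is at most $U^2R^2$. Applying one-sided estimates—discarding the nonpositive terms $-2\alpha(\cdots)$, $-2\xi(\cdots)$ and $-\|\bm{L}\bm{x}_q\|_2^2$ in $f_1$, and discarding $-\|\bm{L}^\ast\bm{x}_{p_i}\|_2^2$ in $-f_2^{(i)}$—gives $f_1\le U^2R^2$ and $-\bar{f}_2\le(1+2\alpha+2\xi)U^2R^2$, whence $f_1-\bar{f}_2\le 2(1+\alpha+\xi)R^2U^2$, which is exactly the asserted bound.

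I expect the delicate point to be the bookkeeping that produces the tight constant $2(\alpha+\xi+1)$ rather than a looser $4(\alpha+\xi+1)$: a symmetric triangle-inequality bound $|f_1|+|\bar{f}_2|$ would double the constant, so the proof must retain only the favorable sign of each term (one-sided bounds) after the margin has cancelled. The secondary source of care is justifying the hinge manipulations—the Jensen step and the subadditivity step—so that the final inequality holds regardless of which triplets are active; once those are in place, the remaining work is the routine Cauchy–Schwarz/norm bookkeeping sketched above.
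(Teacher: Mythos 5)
Your proposal is correct and follows essentially the same route as the paper's proof: reduce to bounding the difference of the un-hinged relative-distance terms, expand each squared distance via the dot-product/angle identity with $\cos\theta=\alpha$ or $-\xi$, keep only the favorably-signed terms, and bound each survivor by $R^2U^2$ using $\|\bm{L}\bm{x}\|_2\le\|\bm{L}\|_F\|\bm{x}\|_2$, yielding the same constant $2(\alpha+\xi+1)$. Your Jensen/subadditivity treatment of the hinge is a cleaner justification of the paper's informal step (``only consider $z\ge0$'' for $\Psi_1$ and $[z]_+\ge z$ for $\Psi_2$, whose displayed inequality for $\Psi_2$ in the paper actually has the inequality sign reversed), but the substance of the argument is identical.
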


\begin{theorem}
Let $\langle\bm{x}_1, \bm{x}_{p_1}, \bm{x}_{q_1}\rangle,\dots,\langle\bm{x}_T, \bm{x}_{p_T}, \bm{x}_{q_T}\rangle$ be a sequence of triplets constructed by the proposed one-pass strategy. Let $\bm{L}_t|_{t=1}^T$ be the solution output by OPML at the $t$-th time step, and $\bm{L}_\ast$ be the optimal offline solution. Assuming $\|\bm{x}\|_2\le R$ (for all samples), $\|\bm{L}\|_F\le U$, $\|\bm{L}_\ast\|_F\le U$ and the angle $\theta$ between two samples coming from the same class is small after the transformation of $\bm{L}$ or $\bm{L}^\ast$ (i.e., $\cos\theta=\alpha, \alpha\ge0$ and $\alpha$ is close to 1), while $\theta$ is large otherwise (i.e., $\cos\theta=-\xi, \xi\ge0$ and $\xi$ is close to 1). Then the regret bound is
\begin{equation}\label{fun-the2}
     R(\bm{L}_\ast, T)\le2T(\alpha+\xi+1)R^2U^2
\end{equation}
\label{theorem4}
\end{theorem}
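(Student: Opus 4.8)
The plan is to read the regret in the standard online-learning sense, $R(\bm{L}_\ast,T)=\sum_{t=1}^T\mathcal{G}((\bm{x}_t,\bm{x}_{p_t},\bm{x}_{q_t});\bm{L}_t)-\sum_{t=1}^T\mathcal{G}((\bm{x}_t,\bm{x}_{p_t},\bm{x}_{q_t});\bm{L}_\ast)$, i.e. the cumulative hinge loss incurred by the online iterates $\bm{L}_t$ minus that of the fixed offline optimum $\bm{L}_\ast$ on the same one-pass triplet sequence. Rewriting this as $R(\bm{L}_\ast,T)=\sum_{t=1}^T[\mathcal{G}_t(\bm{L}_t)-\mathcal{G}_t(\bm{L}_\ast)]=T\,(\Psi_1-\Psi_2)$, where $\Psi_1,\Psi_2$ are the corresponding averaged per-step losses, makes the target $2T(\alpha+\xi+1)R^2U^2$ appear as exactly $T$ copies of the single-step bound $2(\alpha+\xi+1)R^2U^2$ already established in Theorem \ref{theorem3}. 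So the whole proof reduces to showing that each per-step gap is at most $2(\alpha+\xi+1)R^2U^2$ and then summing.

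For that per-step estimate, which is the heart of the argument, I would reuse the expansion behind Theorem \ref{theorem3}. Write each squared distance as $\|\bm{L}(\bm{x}_t-\bm{x})\|_2^2=\|\bm{L}\bm{x}_t\|_2^2+\|\bm{L}\bm{x}\|_2^2-2\cos\theta\,\|\bm{L}\bm{x}_t\|_2\|\bm{L}\bm{x}\|_2$. In the margin-violation quantity $z=\|\bm{L}(\bm{x}_t-\bm{x}_{p_t})\|_2^2-\|\bm{L}(\bm{x}_t-\bm{x}_{q_t})\|_2^2$ the self-term $\|\bm{L}\bm{x}_t\|_2^2$ cancels, leaving $\|\bm{L}\bm{x}_{p_t}\|_2^2-\|\bm{L}\bm{x}_{q_t}\|_2^2-2\alpha\|\bm{L}\bm{x}_t\|_2\|\bm{L}\bm{x}_{p_t}\|_2-2\xi\|\bm{L}\bm{x}_t\|_2\|\bm{L}\bm{x}_{q_t}\|_2$, where I have substituted $\cos\theta=\alpha$ for the same-class pair and $\cos\theta=-\xi$ for the different-class pair. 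Bounding every projected norm by $\|\bm{L}\bm{x}\|_2\le\|\bm{L}\|_F\|\bm{x}\|_2\le UR$ then collapses the two square terms and the two cross terms into $|z|\le 2U^2R^2+2\alpha U^2R^2+2\xi U^2R^2=2(\alpha+\xi+1)R^2U^2$; the constant margin $1$ is handled by the $1$-Lipschitz property of $[\cdot]_+$ (and cancels in the loss difference), so it contributes nothing extra.

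Summing this bound over $t=1,\dots,T$ immediately gives $R(\bm{L}_\ast,T)\le\sum_{t=1}^T 2(\alpha+\xi+1)R^2U^2=2T(\alpha+\xi+1)R^2U^2$, which is the claim. The remaining work is purely to transcribe the Theorem \ref{theorem3} estimate into the per-step regret setting, since the assumptions in the statement of Theorem \ref{theorem4} ($\|\bm{L}\|_F\le U$, $\|\bm{L}_\ast\|_F\le U$, $\|\bm{x}\|_2\le R$, and the two angle conditions) are precisely those needed to apply the bound to both $\bm{L}_t$ and $\bm{L}_\ast$.

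The main obstacle I anticipate is the reconciliation step: Theorem \ref{theorem3} phrases its bound as a comparison between the one-pass and batch triplet \emph{construction} strategies (with metrics $\bm{L}$ and $\bm{L}^\ast$), whereas the regret compares the online iterate $\bm{L}_t$ against a single offline optimum $\bm{L}_\ast$ on a fixed triplet stream. I would have to verify that the identical norm and angle hypotheses hold verbatim for $\bm{L}_t$ and $\bm{L}_\ast$ so that the estimate transfers unchanged, and that bounding the gap by $\mathcal{G}_t(\bm{L}_t)-\mathcal{G}_t(\bm{L}_\ast)\le 2(\alpha+\xi+1)R^2U^2$ is legitimate (using $\mathcal{G}_t(\bm{L}_\ast)\ge 0$ together with the margin cancellation). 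The secondary delicate point is the sign bookkeeping when stripping $[\cdot]_+$, to ensure the cancellation of the margin $1$ introduces no spurious term; everything else is the routine algebra inherited from Theorem \ref{theorem3}.
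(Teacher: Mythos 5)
Your proposal is correct and takes essentially the same route as the paper: the paper's own proof of Theorem~\ref{theorem4} also defines the regret as the cumulative hinge-loss gap, strips the $[\cdot]_+$ (keeping only steps where $\bm{L}_t$ incurs positive loss and using nonnegativity/expansion for $\bm{L}_\ast$), and then invokes the per-step estimate of Theorem~\ref{theorem3} summed over $t=1,\dots,T$ to obtain $2T(\alpha+\xi+1)R^2U^2$. The reconciliation you flag is handled in the paper exactly as you anticipate, so no new idea is needed beyond transcribing the Theorem~\ref{theorem3} algebra with $\bm{L}_t$ and $\bm{L}_\ast$ in place of $\bm{L}$ and $\bm{L}^\ast$.
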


\section{Experiments}
\label{experiments}
To verify the effectiveness of our methods, we evaluate OPML and COPML on three typical tasks, including (1) UCI data classification, (2) face verification, and (3) abnormal event detection in videos. Also, an additional experiment is conducted to validate the robustness of COPML when the cold start issue happens.
\begin{table*}[!htb]
\centering
\tabcolsep=8pt
\caption{Error rates (mean$\pm$std. deviation) of a k-NN (k=5) classifier on the UCI datasets. $p-$values of student's t-test are calculated between other methods and our methods. $\bullet/\circ$ indicates OPML performs statistically better/worse than the respective method according to the $p-$values. The statistics of win/tie/loss is also included. The value in the bracket means the corresponding total processing time in second. 0.00 denotes the value is very small ($<0.005$). Abbreviations: Sam, sample; Dim, dimensionality; C, classes}
\vspace{2pt}
\begin{tabular}{|L{36pt}C{15pt}C{14pt}C{13pt}|c|c|c|ccccccccccc|}
\hline
Data         &Sam      &Dim   &C                           &Euclidean                             &Mahalanobis                         &LMNN\\
\hline
lsvt         &126      &310   &2                           &$0.234\pm0.056\bullet$                &$0.238\pm0.046\bullet$              &$0.196\pm0.044\ \ \ (46.08)$\\

iris         &150      &4     &3                           &$0.050\pm0.023\ \ $                   &$0.075\pm0.025\bullet$              &$0.037\pm0.017\circ(\ \ 1.55)$\\

wine         &178      &13    &3                           &$0.044\pm0.020\ \ $                   &$0.046\pm0.023\ \ $                 &$0.030\pm0.016\circ(\ \ 3.18)$ \\

glass        &214      &9     &7                           &$0.336\pm0.036\ \ $                   &$0.349\pm0.037\ \ $                 &$0.341\pm0.038\ \ \ (\ \ 4.52)$\\

spect        &267      &22    &2                           &$0.327\pm0.035\ \ $                   &$0.348\pm0.034\bullet$              &$0.336\pm0.037\ \ \ (\ \ 2.48)$\\

ionosphere   &351      &34    &2                           &$0.172\pm0.019\bullet$                &$0.165\pm0.016\ \ $                 &$0.131\pm0.020\circ(\ \ 3.81)$\\

balance      &625      &4     &3                           &$0.146\pm0.014\bullet$                &$0.133\pm0.017\bullet$              &$0.124\pm0.014\circ(\ \ 1.42)$\\

breast       &683      &9     &2                           &$0.034\pm0.008\ \ $                   &$0.034\pm0.007\ \ $                 &$0.033\pm0.008\ \ \ (\ \ 1.26)$ \\

pima         &768      &8     &2                           &$0.273\pm0.018\bullet$                &$0.271\pm0.018\bullet$              &$0.272\pm0.017\bullet(\ \ 1.12)$\\

segment      &2310     &19    &7                           &$0.067\pm0.006\bullet$                &$0.101\pm0.008\bullet$              &$0.047\pm0.006\circ(\ \ 5.59)$\\

waveform     &5000     &21    &3                           &$0.187\pm0.006\circ$                  &$0.158\pm0.006\circ$                &$0.182\pm0.060\circ(\ \ 6.04)$\\

optdigits    &5620     &64    &10                          &$0.026\pm0.003\bullet$                &$0.039\pm0.003\bullet$              &$0.014\pm0.002\circ(31.21)$\\
\hline
\multicolumn{4}{|c|}{\textbf{win/tie/loss}}                &\textbf{6/5/1}                        &\textbf{7/4/1}                      &\textbf{1/4/7} \\
\hline
\hline
\multicolumn{4}{|c|}{ITML}                                 &OASIS                                 &RDML                                &POLA\\
\hline
\multicolumn{4}{|c|}{$0.175\pm0.040\circ(142.3)$}          &$0.205\pm0.043\bullet(0.84)$          &$0.230\pm0.053\bullet\ (0.17)$      &$0.157\pm0.036\circ(\ \ 9.17)$\\

\multicolumn{4}{|c|}{$0.034\pm0.016\circ(11.16)$}          &$0.274\pm0.050\bullet(0.10)$          &$0.077\pm0.027\bullet(\bm{0.00})$   &$0.030\pm0.016\circ(\ \ 0.62)$\\

\multicolumn{4}{|c|}{$0.035\pm0.019\circ(14.15)$}          &$0.019\pm0.014\circ(0.08)$            &$0.039\pm0.019\ \ \ (\bm{0.00})$    &$0.028\pm0.018\circ(\ \ 2.22)$ \\

\multicolumn{4}{|c|}{$0.358\pm0.042\bullet(30.94)$}        &$0.485\pm0.057\bullet(0.12)$          &$0.349\pm0.035\ \ \ (\bm{0.00})$    &$0.395\pm0.042\ \ \ (\ \ 3.86)$\\

\multicolumn{4}{|c|}{$0.337\pm0.041\ \ \ (\ \ 3.44)$}      &$0.364\pm0.050\bullet(0.13)$          &$0.343\pm0.032\bullet(\bm{0.01})$   &$0.323\pm0.040\bullet(19.22)$\\

\multicolumn{4}{|c|}{$0.139\pm0.025\circ(\ \ 3.65)$}       &$0.124\pm0.033\circ(0.12)$            &$0.154\pm0.016\circ\ (0.02)$        &$0.147\pm0.020\circ(17.65)$\\

\multicolumn{4}{|c|}{$0.104\pm0.019\circ(15.67)$}          &$0.126\pm0.009\ \ \ (0.10)$           &$0.118\pm0.012\circ(\bm{0.01})$     &$0.156\pm0.047\bullet(\ \ 7.31)$\\

\multicolumn{4}{|c|}{$0.035\pm0.008\ \ \ (\ \ 2.68)$}      &$0.043\pm0.021\bullet(0.09)$          &$0.033\pm0.007\ \ \ (\bm{0.01})$    &$0.038\pm0.010\bullet(\ \ 5.13)$ \\

\multicolumn{4}{|c|}{$0.279\pm0.022\bullet(\ \ 3.16)$}     &$0.346\pm0.053\bullet(0.12)$          &$0.269\pm0.019\ \ \ (\bm{0.01})$    &$0.275\pm0.021\bullet(12.40)$\\

\multicolumn{4}{|c|}{$0.050\pm0.008\circ(38.50)$}          &$0.343\pm0.067\bullet(0.11)$          &$0.082\pm0.006\bullet(\bm{0.02})$   &$0.057\pm0.011\ \ \ (22.89)$\\

\multicolumn{4}{|c|}{$0.187\pm0.008\circ(18.57)$}          &$0.357\pm0.039\bullet(0.13)$          &$0.186\pm0.010\circ\ (0.12)$        &$0.250\pm0.030\bullet(28.38)$\\

\multicolumn{4}{|c|}{$0.028\pm0.006\bullet(122.4)$}        &$0.077\pm0.009\bullet(0.15)$          &$0.028\pm0.003\bullet\ (0.13)$      &$0.023\pm0.003\bullet(19.76)$\\
\hline
\multicolumn{4}{|c|}{\textbf{3/2/7}}                       &\textbf{9/1/2}                        &\textbf{5/4/3}                      &\textbf{6/2/4} \\
\hline
\hline
\multicolumn{4}{|c|}{LEGO}                                 &SOML                                  &\textbf{OPML}                       &\textbf{COPML}\\
\hline
\multicolumn{4}{|c|}{$0.239\pm0.050\bullet(\ \ 2.33)$}     &$0.223\pm0.057\bullet(\ \ 2.23)$      &$0.189\pm0.048\ \ (\bm{0.07})$      &$0.189\pm0.047\ \ (\bm{0.07})$\\

\multicolumn{4}{|c|}{$0.050\pm0.021\ \ \ (\ \ 0.14)$}      &$0.287\pm0.080\bullet(\ \ 0.69)$      &$0.049\pm0.023\ \ (\bm{0.00})$      &$0.048\pm0.023\ \ (\bm{0.00})$\\

\multicolumn{4}{|c|}{$0.031\pm0.020\circ(\ \ 0.23)$}       &$0.169\pm0.093\bullet(\ \ 0.92)$      &$0.042\pm0.020\ \ (\bm{0.00})$      &$0.041\pm0.019\ \ (\bm{0.00})$\\

\multicolumn{4}{|c|}{$0.390\pm0.034\bullet(\ \ 0.31)$}     &$0.557\pm0.117\bullet(\ \ 1.16)$      &$0.339\pm0.032\ \ (\bm{0.00})$      &$0.341\pm0.033\ \ (\bm{0.00})$\\

\multicolumn{4}{|c|}{$0.311\pm0.038\circ(\ \ 0.61)$}       &$0.393\pm0.084\bullet(\ \ 1.45)$      &$0.326\pm0.034\ \ (\bm{0.01})$      &$0.327\pm0.035\ \ (\bm{0.01})$\\

\multicolumn{4}{|c|}{$0.154\pm0.020\circ(\ \ 1.09)$}       &$0.362\pm0.116\bullet(\ \ 2.18)$      &$0.161\pm0.019\ \ (\bm{0.01})$      &$0.163\pm0.021\ \ (\bm{0.01})$\\

\multicolumn{4}{|c|}{$0.118\pm0.011\circ(\ \ 1.40)$}       &$0.378\pm0.077\bullet(\ \ 4.78)$      &$0.129\pm0.012\ \ (\bm{0.01})$      &$0.129\pm0.014\ \ (\bm{0.01})$\\

\multicolumn{4}{|c|}{$0.035\pm0.008\bullet(\ \ 1.64)$}     &$0.054\pm0.040\bullet(\ \ 5.19)$      &$0.032\pm0.008\ \ (\bm{0.01})$      &$0.032\pm0.007\ \ (\bm{0.01})$\\

\multicolumn{4}{|c|}{$0.266\pm0.019\ \ \ (\ \ 1.96)$}      &$0.353\pm0.060\bullet(\ \ 5.89)$      &$0.266\pm0.017\ \ (\bm{0.01})$      &$0.265\pm0.018\ \ (0.02)$\\

\multicolumn{4}{|c|}{$0.040\pm0.006\circ(14.50)$}          &$0.541\pm0.092\bullet(19.26)$         &$0.059\pm0.006\ \ (0.03)$           &$0.059\pm0.006\ \ (0.03)$\\

\multicolumn{4}{|c|}{$0.233\pm0.007\bullet(\ \ 4.60)$}     &$0.368\pm0.035\bullet(49.30)$         &$0.224\pm0.009\ \ (\bm{0.08})$      &$0.225\pm0.010\ \ (0.09)$\\

\multicolumn{4}{|c|}{$0.022\pm0.003\bullet(\ \ 6.97)$}     &$0.239\pm0.079\bullet(79.10)$         &$0.019\pm0.003\ \ (\bm{0.09})$      &$0.019\pm0.003\ \ (0.10)$\\
\hline
\multicolumn{4}{|c|}{\textbf{5/2/5}}                       &\textbf{12/0/0}                       &                                    &\\
\hline
\end{tabular}
\label{error-runtime}
\end{table*}

\begin{figure*}[!htbp]
\centering
\subfigure{
            \includegraphics[width=0.23\textwidth]{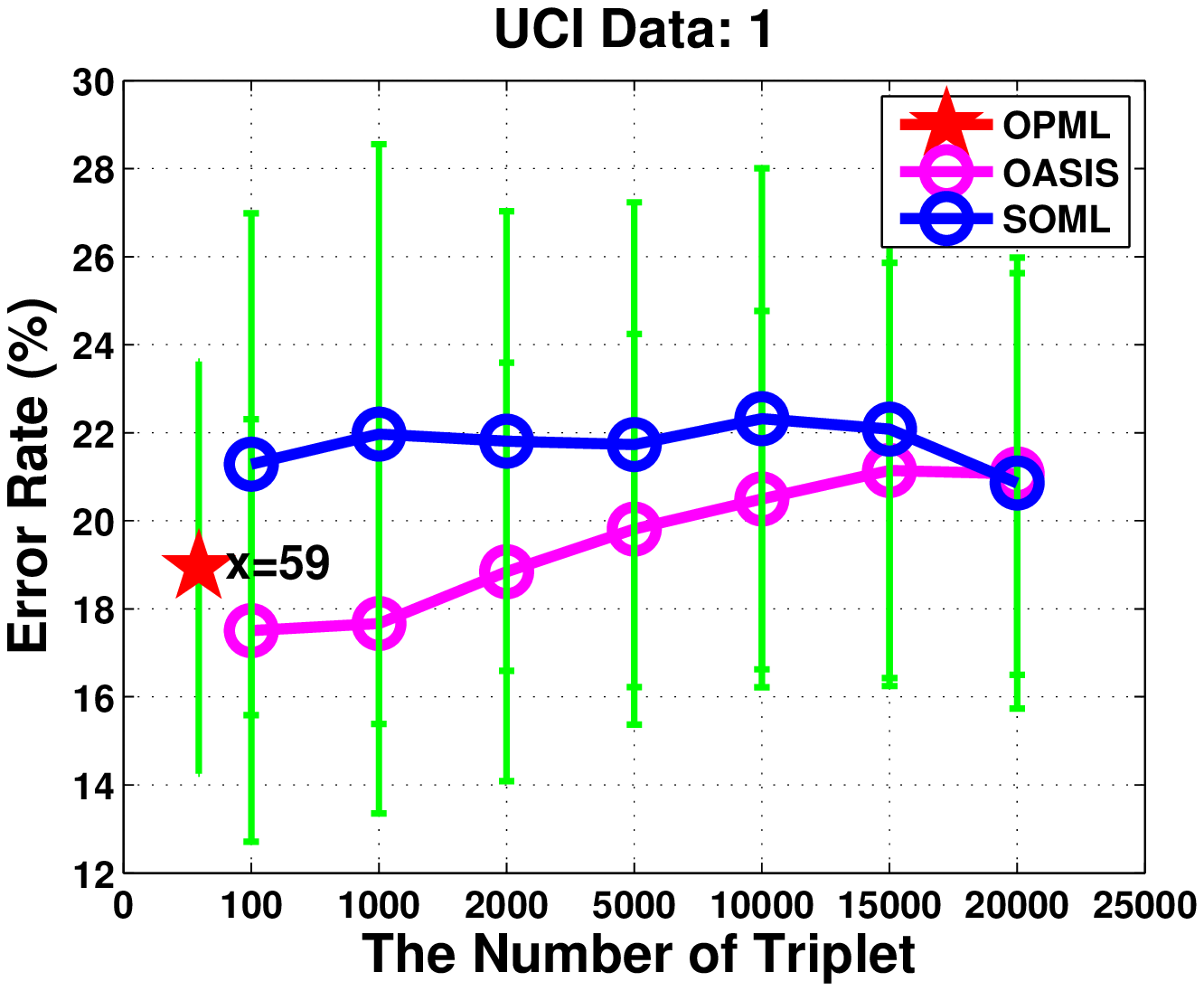}}
\subfigure{
            \includegraphics[width=0.23\textwidth]{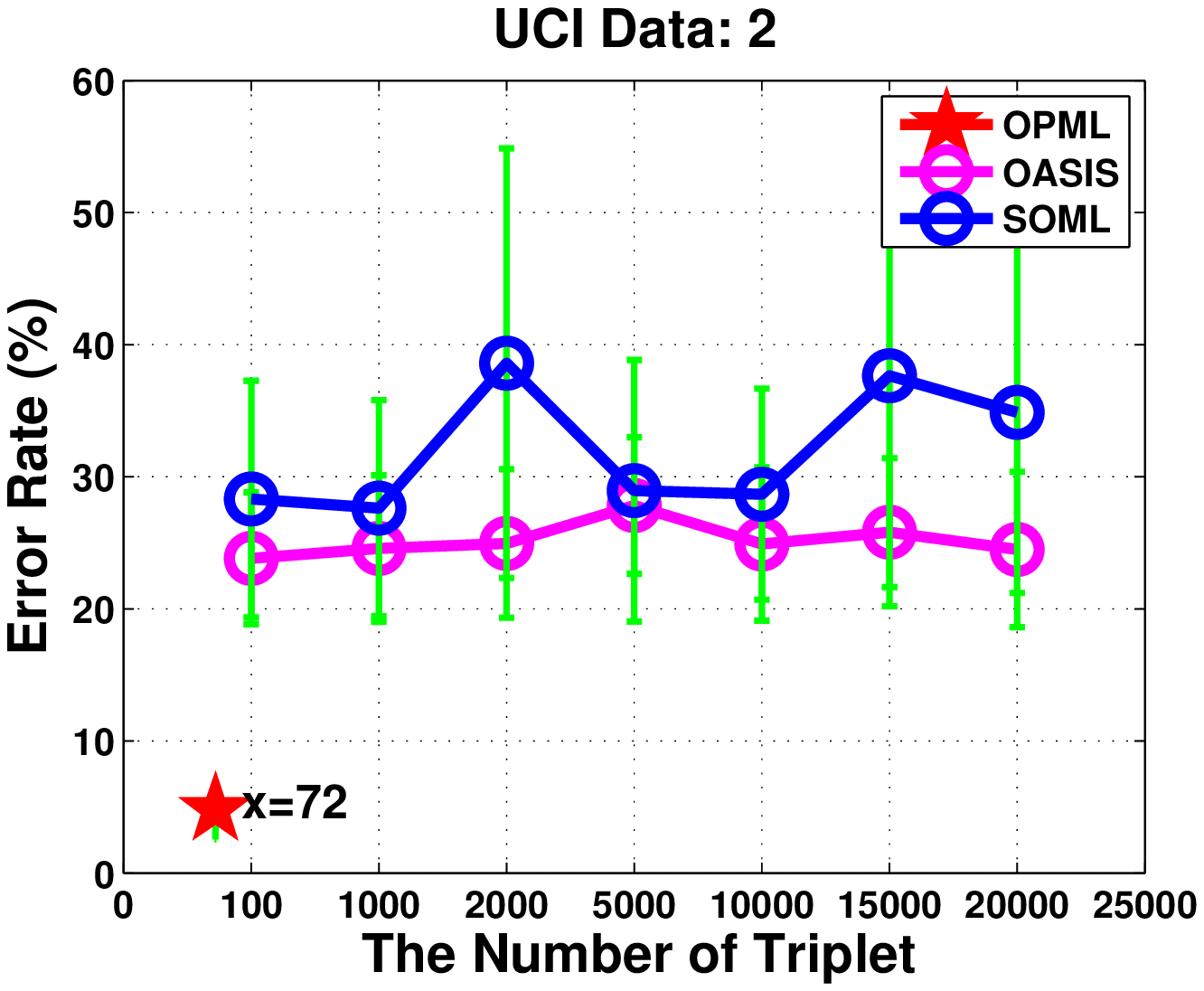}}
\subfigure{
            \includegraphics[width=0.23\textwidth]{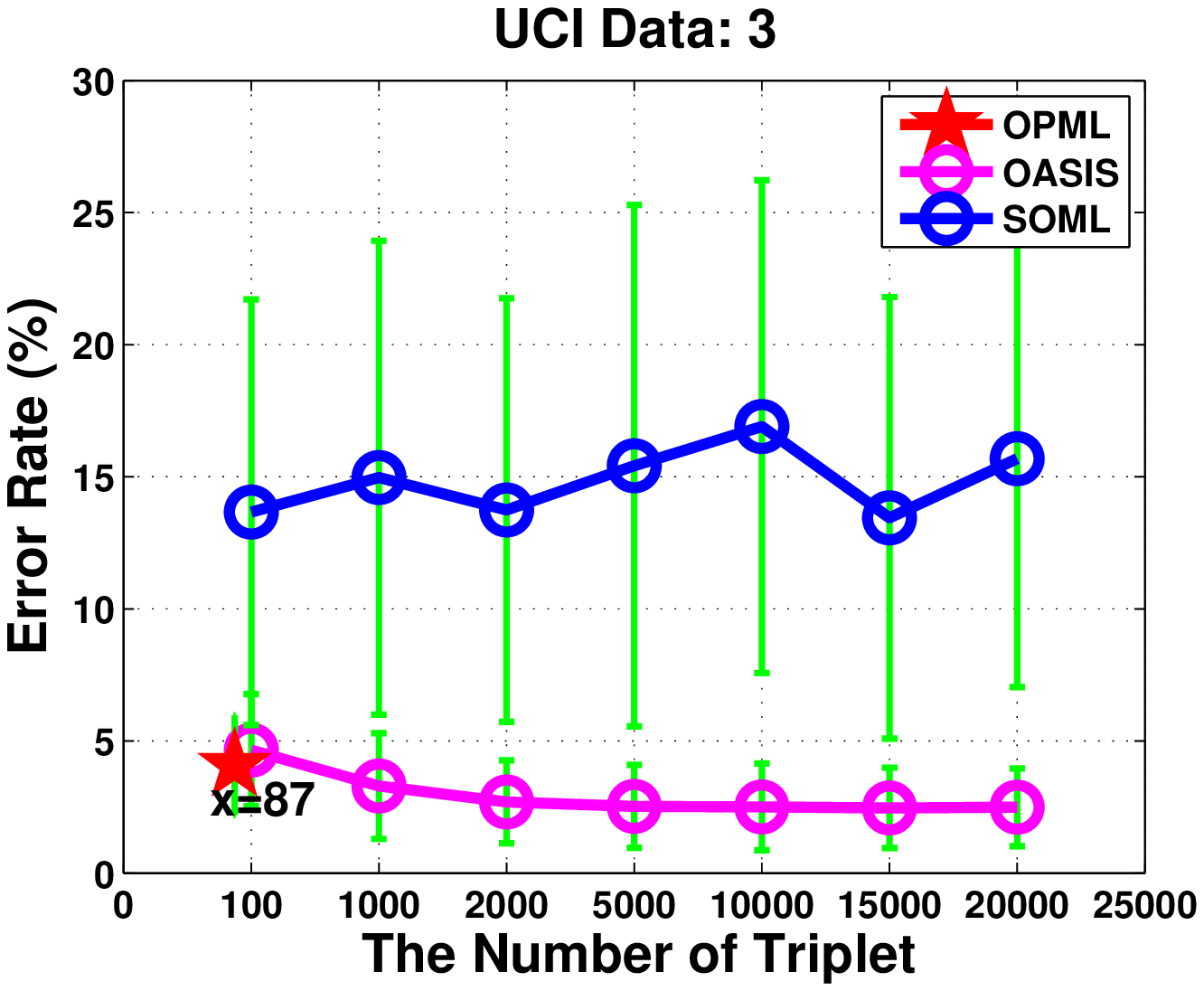}}
\subfigure{
            \includegraphics[width=0.23\textwidth]{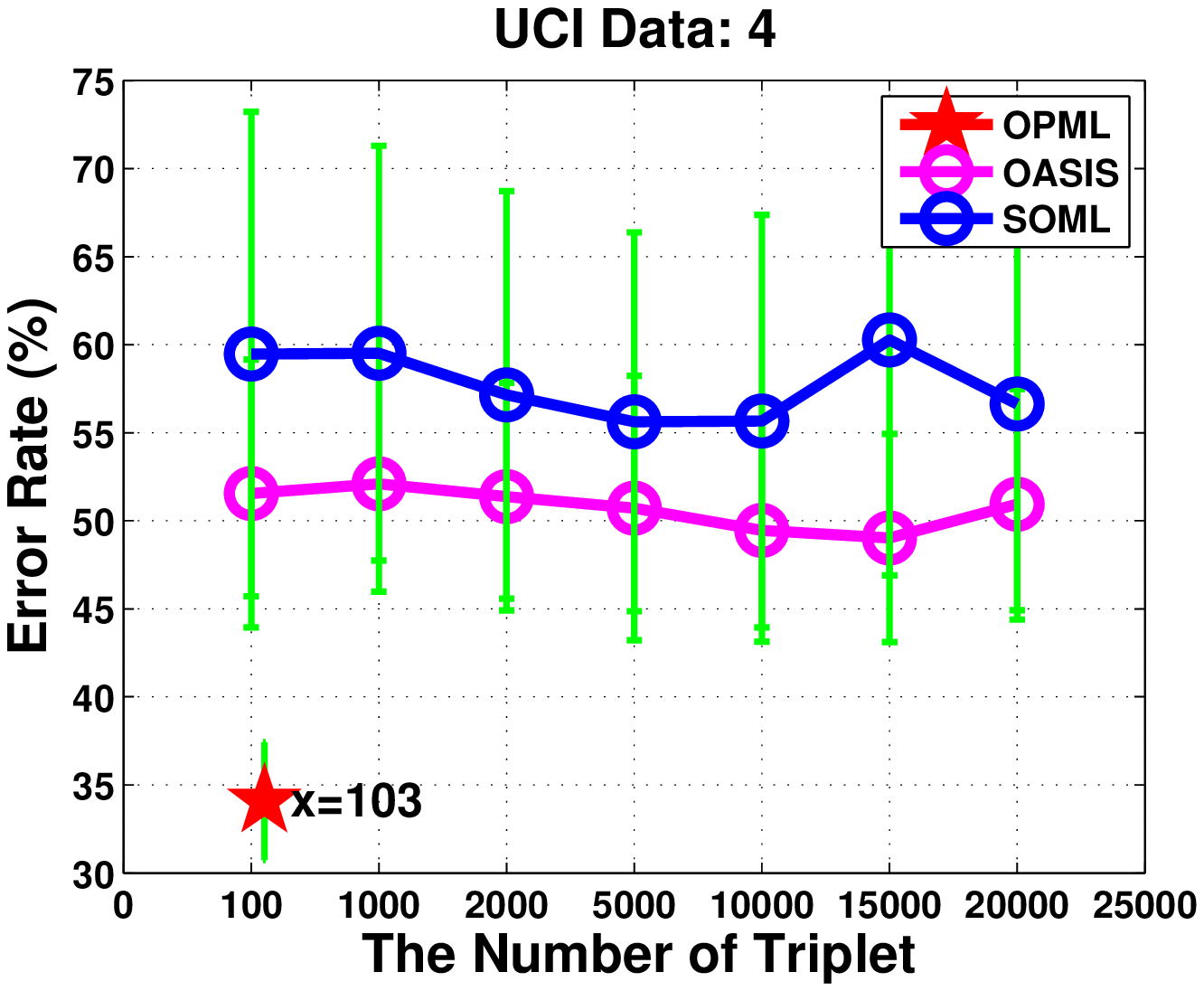}}
\subfigure{
            \includegraphics[width=0.23\textwidth]{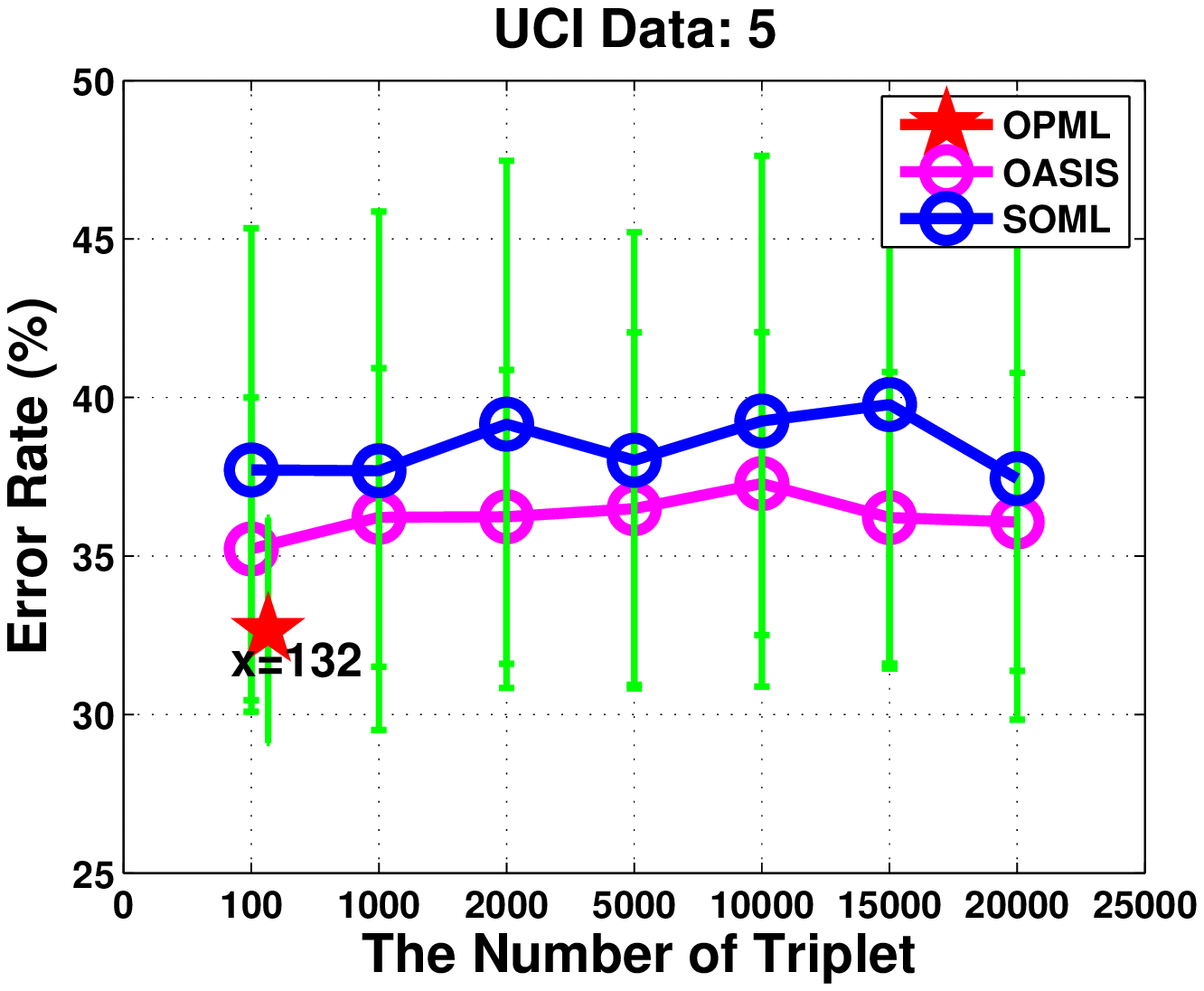}}
\subfigure{
            \includegraphics[width=0.23\textwidth]{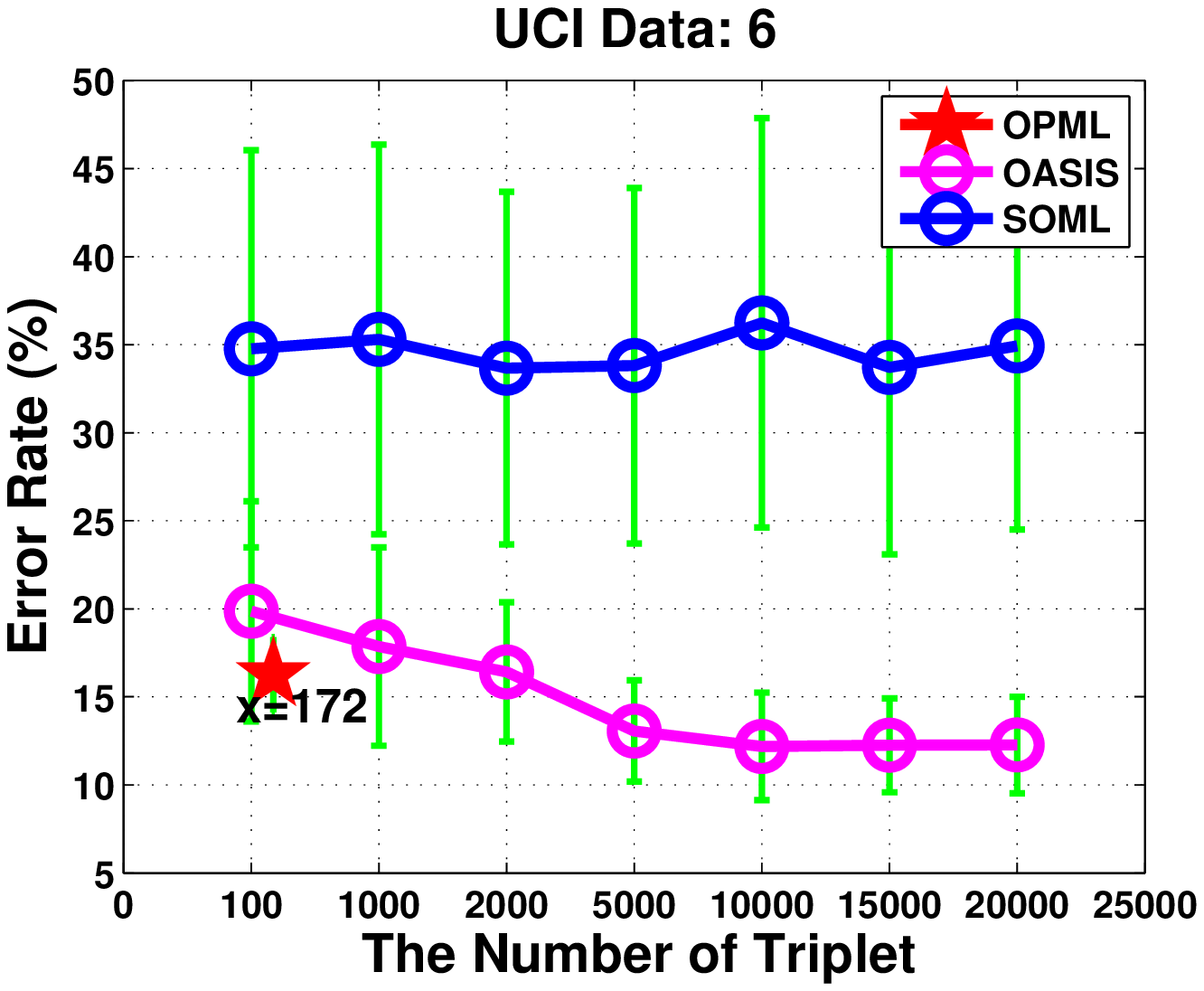}}
\subfigure{
            \includegraphics[width=0.23\textwidth]{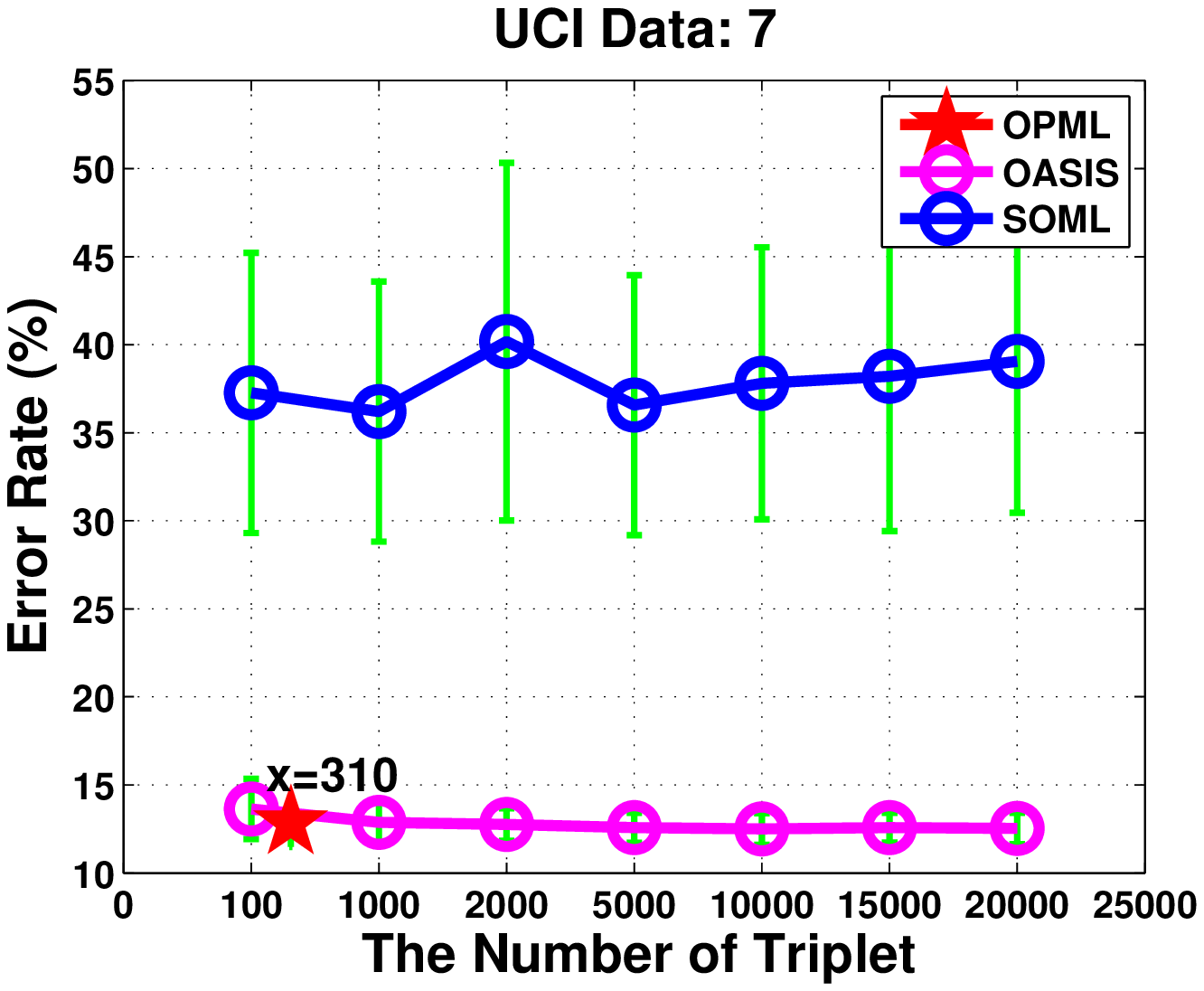}}
\subfigure{
            \includegraphics[width=0.23\textwidth]{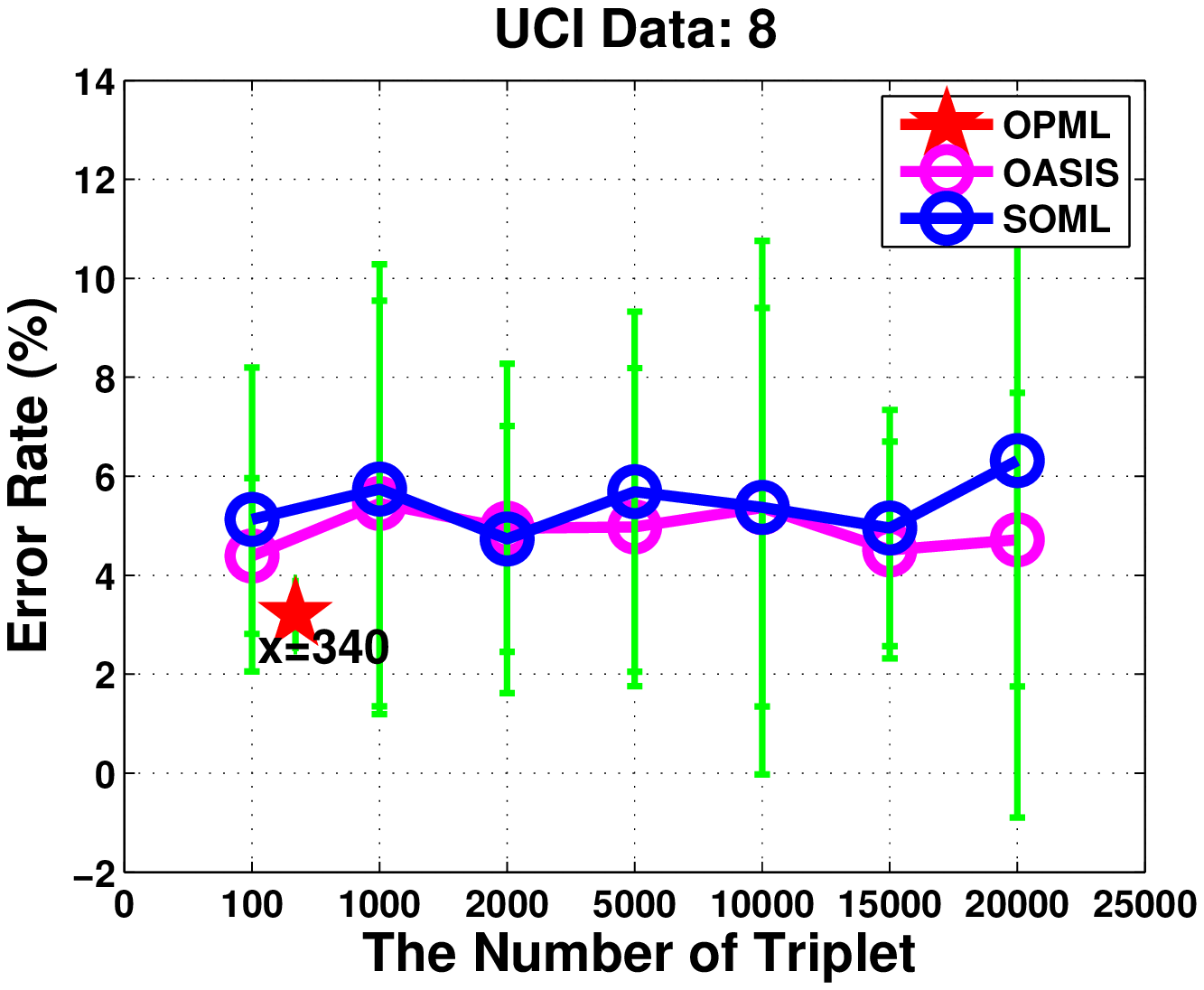}}
\subfigure{
            \includegraphics[width=0.23\textwidth]{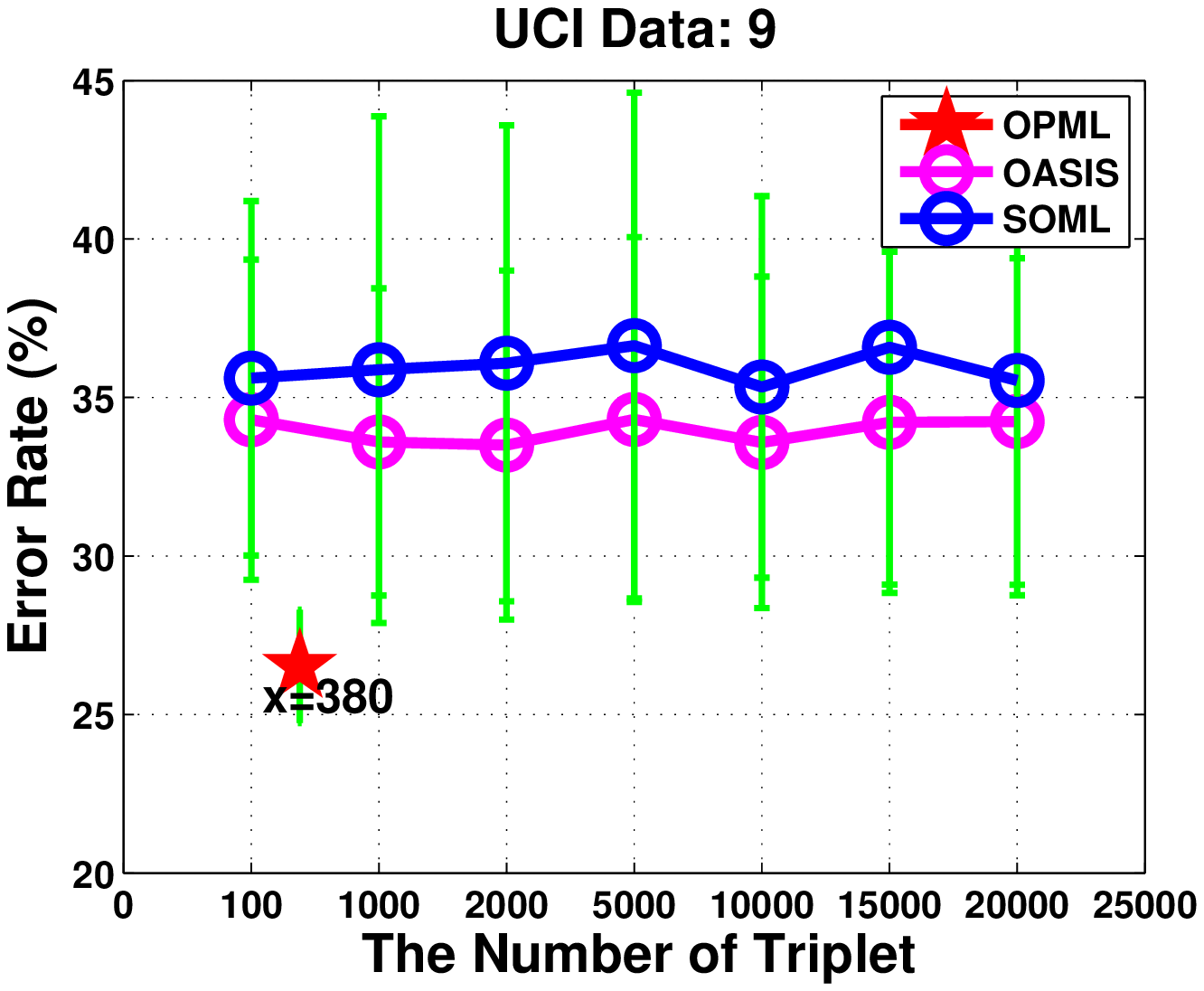}}
\subfigure{
            \includegraphics[width=0.23\textwidth]{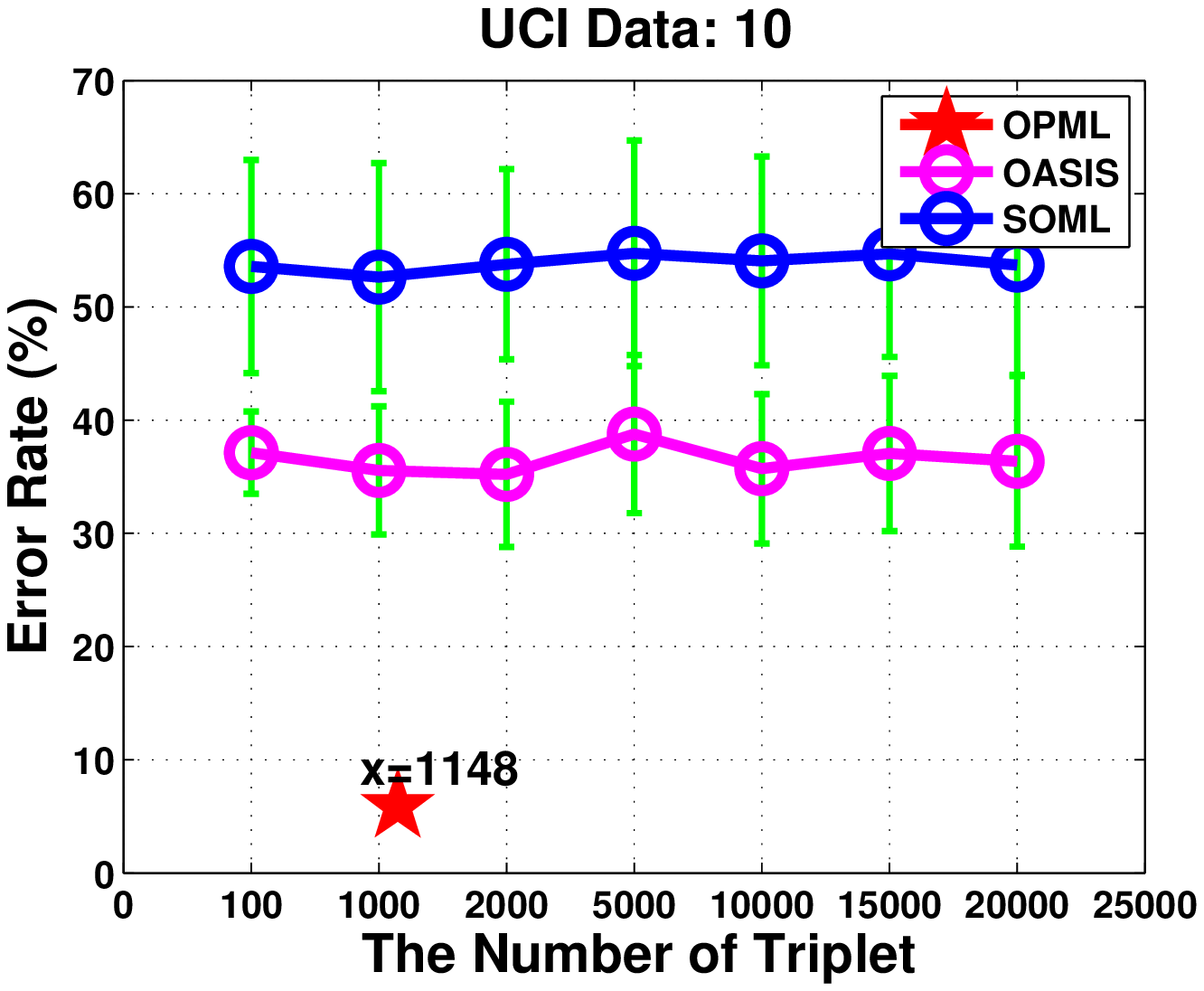}}
\subfigure{
            \includegraphics[width=0.23\textwidth]{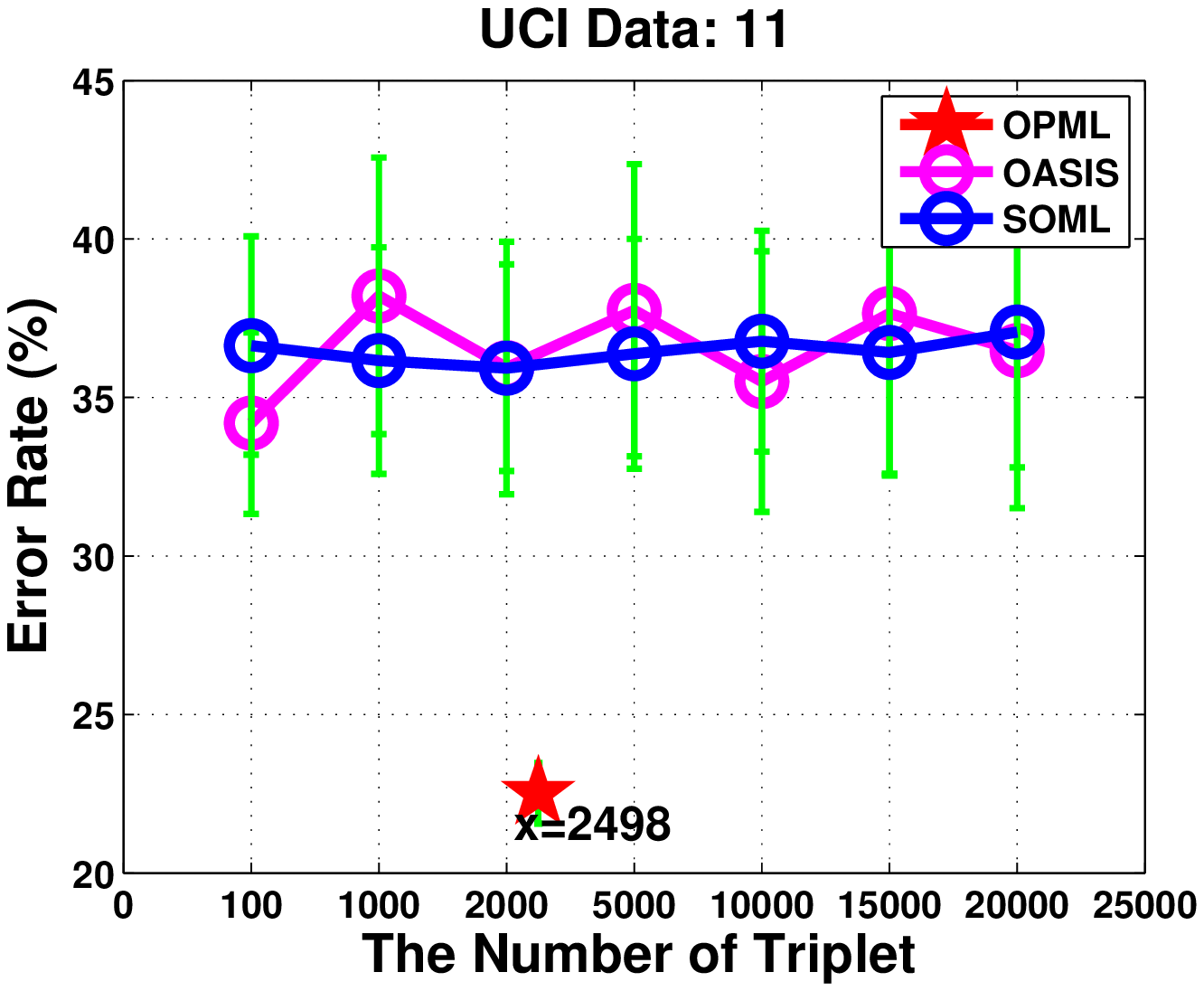}}
\subfigure{
            \includegraphics[width=0.23\textwidth]{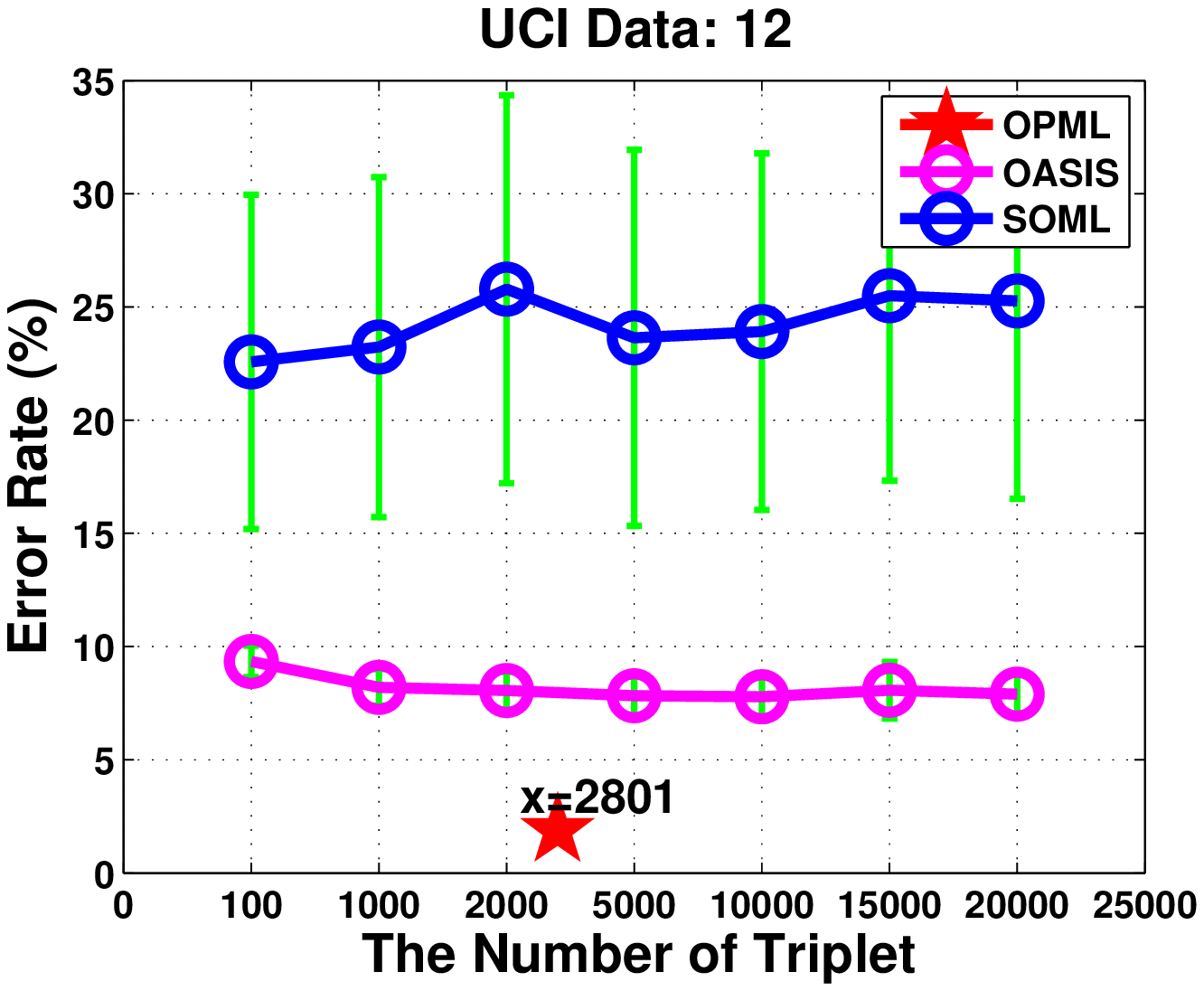}}
\caption{Error rates of different methods with different numbers of triplets on twelve UCI datasets (the number of triplets in OPML is a constant)}
\label{fig-scale}
\end{figure*}
\subsection{UCI Data Classification}

We introduce twelve datasets from the UCI repository for evaluation. The $k$-NN classifier is employed, since it is widely-used for classification with only one parameter. The detailed information of these datasets is presented in Table \ref{error-runtime}. All these twelve datasets are normalized by Z-score. Also, for each dataset, $50\%$ samples are randomly picked for training while the rest is used for testing. We adopt the error rate as the evaluation criterion, and to reduce the influence coming from the random partition, all the classification results are averaged over 100 individual runs.

To make an extensive comparison, we introduce several state-of-the-art methods, including batch metric learning and OML methods. Specifically, batch metric learning methods include: (1) \textbf{Euclidean} distance metric (\textbf{Eucli} for short); (2) \textbf{ Mahalanobis} distance metric (\textbf{Maha} for short); (3) \textbf{LMNN} (Large Margin Nearest Neighbor) \cite{weinberger2009distance}; (4) \textbf{ITML} \cite{davis2007information}. OML methods include: (1) \textbf{OASIS} \cite{chechik2010large}; (2) \textbf{RDML} \cite{jin2009regularized}; (3) \textbf{POLA} \cite{shalev2004online}; (4) \textbf{LEGO} \cite{jain2009online}; (5) \textbf{SOML-TG} (\textbf{SOML} for short) \cite{gao2014soml}. The implementation of LMNN, ITML and OASIS was provided by the authors in their respective papers, while the rest methods were implemented by ourselves. The parameters of these methods were selected by cross-validation, except LMNN and ITML using the default settings. Since the pairwise or triplet constraints of POLA, LEGO and SOML need to be constructed in advance, we randomly sample 10000 constraints for these three methods (same setting as LEGO \cite{jain2009online}). The error rates of the proposed methods and competitive methods are presented in Table \ref{error-runtime}.

Moreover, the $p$-values of student's t-test were calculated to check statistical significance. Also, the statistics of win/tie/loss is reported according to the obtained $p$-values (see Table \ref{error-runtime}). It is observed that (1) the performance of our methods is comparable to LEGO, and slightly better than other OML methods; (2) the performance of our methods is close to batch metric learning methods, e.g., LMNN and ITML, and better than Euclidean and Mahalanobis; (3) our methods are faster than other OML methods except comparable with RDML, since instead of constructing triplets, RDML only requires the pairwise constraint by receiving a pair of samples in each time.

To illustrate the performance with different numbers of triplet constraints on the learning of metric, we vary the numbers of triplet constraints as (100, 1000, 2000, 5000, 10000, 15000, 20000) for OASIS and SOML (see Fig. \ref{fig-scale}). Since the number of triplet constraint in OPML is a constant by using one-pass triplet construction, we can find that, OPML can achieve better performance by using fewer triplet constraints (except on UCI data 1, 3, 6).

\subsection{Face Verification: PubFig}
\label{section_face}
For face verification, we first evaluate our methods on the Public Figures Face Database (PubFig) \cite{kumar2009attribute}. PubFig dataset consists of two subsets: Development Set (7650 images of 60 individuals) and Evaluation Set (28954 images of 140 individuals). Following \cite{kumar2009attribute}, we use the development set to develop all these methods, including parameters tuning, while the evaluation set is used for performance evaluation. The goal of face verification in PubFig is to determine whether a pair of face images belong to the same person. Please note that, images coming from the same person will be regarded as belonging to the same class. For all subsets, 10-fold cross validation is adopted to conduct the experiments, and each fold is disjoint by identity (i.e., one person will not appear in both the training and testing set). For testing each fold (with rest 9 folds used for training), we randomly sample 10000 pairs (5000 intra- and 5000 extra-personal pairs) for testing. Thus, the total number of pairs is $10^5$. In each training phase, we also randomly select 10000 pairwise or triplet constraints for LEGO, POLA and SOML as the same settings on the UCI datasets.

For sufficient and fair comparison, we use two forms of features (i.e., attribute features and deep features) to evaluate the performance of all algorithms, respectively. Attribute features (73-dimension) provided  by Kumar et al. \cite{kumar2009attribute} are 'high-level' features describing nameable attributes such as gender, race, age, hair etc., of a face image. For deep features, we use a VGG-Face model \cite{Parkhi15} to extract a 4096-dimensional feature for each face image which has been aligned and cropped.  For easier handling, the 4096-dimensional feature is reduced to a 54-dimensional feature by Principal Component Analysis (PCA) algorithm.

For each testing pair, we first calculate the distance (similarity) between them by the learned metric obtained from respective methods. Then, all the distances (similarities) are normalized into the range $[0,1]$. Receiver Operating Characteristic (ROC) curves are provided in Fig. \ref{roc-Figure}, with the corresponding AUC (Area under ROC) values calculated. It can be observed that OPML and COPML can obtain superior results compared with the state-of-the-art online/batch metric learning methods. Moreover, although the deep feature already has a strong representation ability, our proposed methods can still slightly improve the performance.

\begin{figure}[!htbp]
     \centering
     \includegraphics[width = 0.24\textwidth]{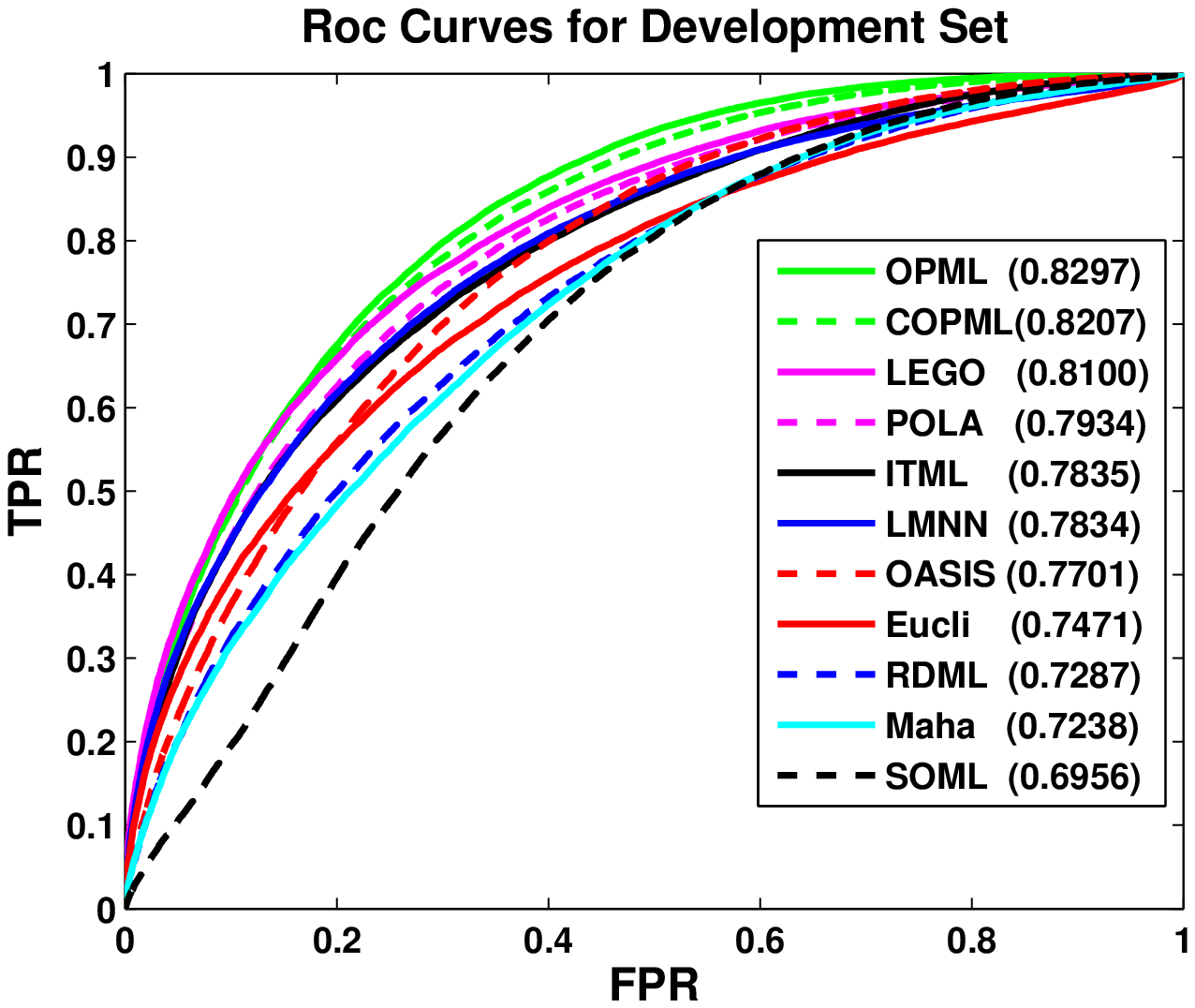}
     \includegraphics[width = 0.24\textwidth]{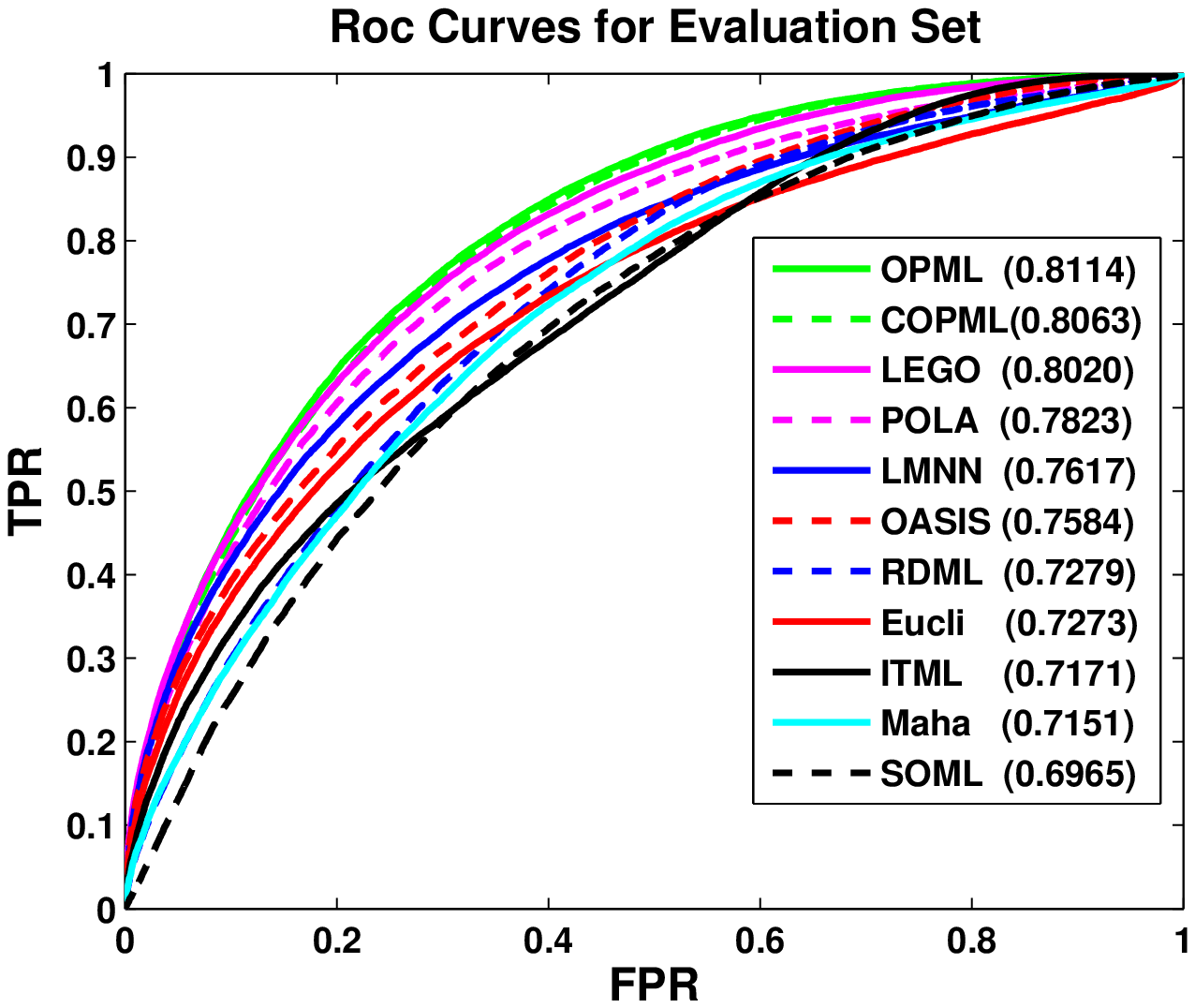}
     \includegraphics[width = 0.24\textwidth]{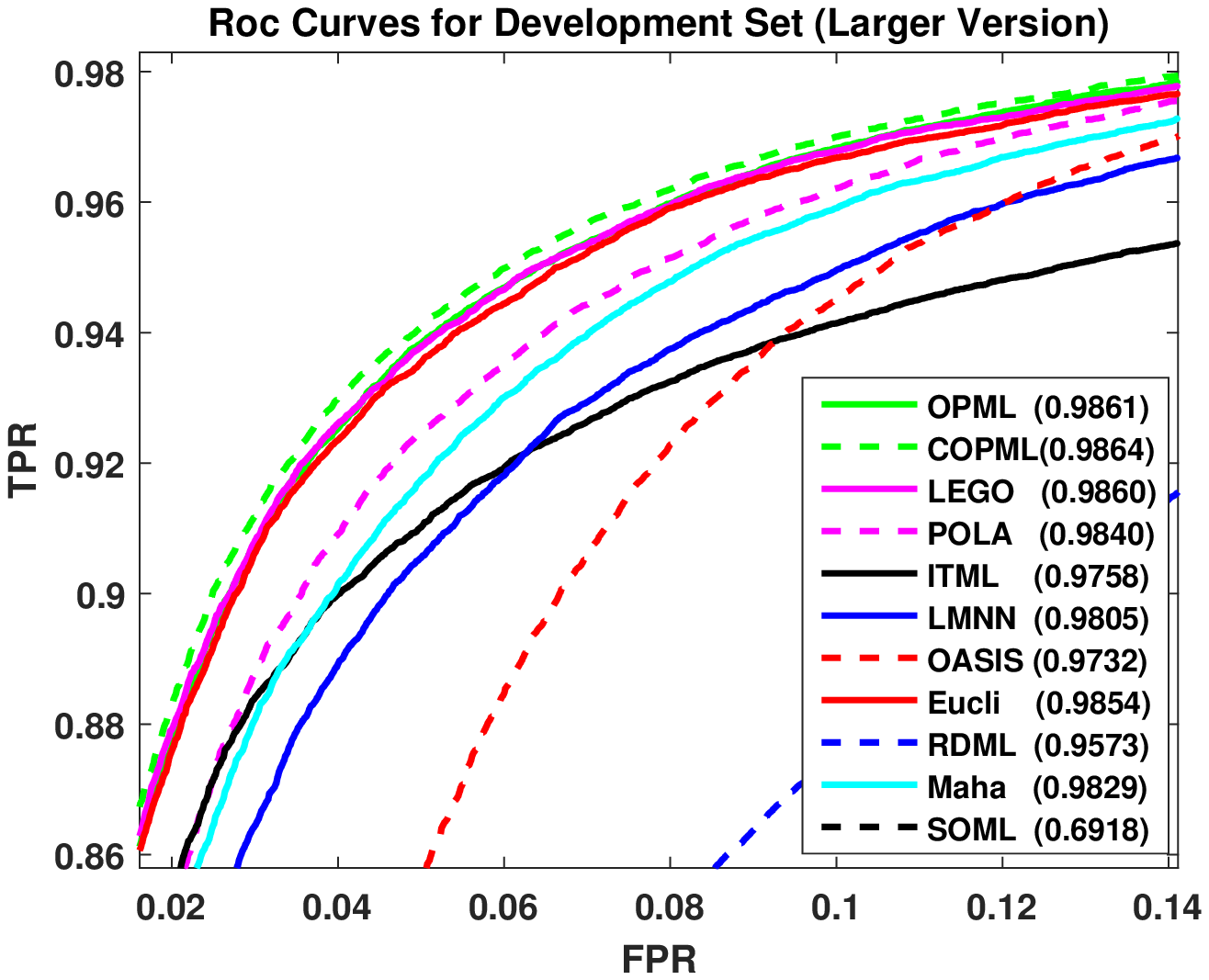}
     \includegraphics[width = 0.24\textwidth]{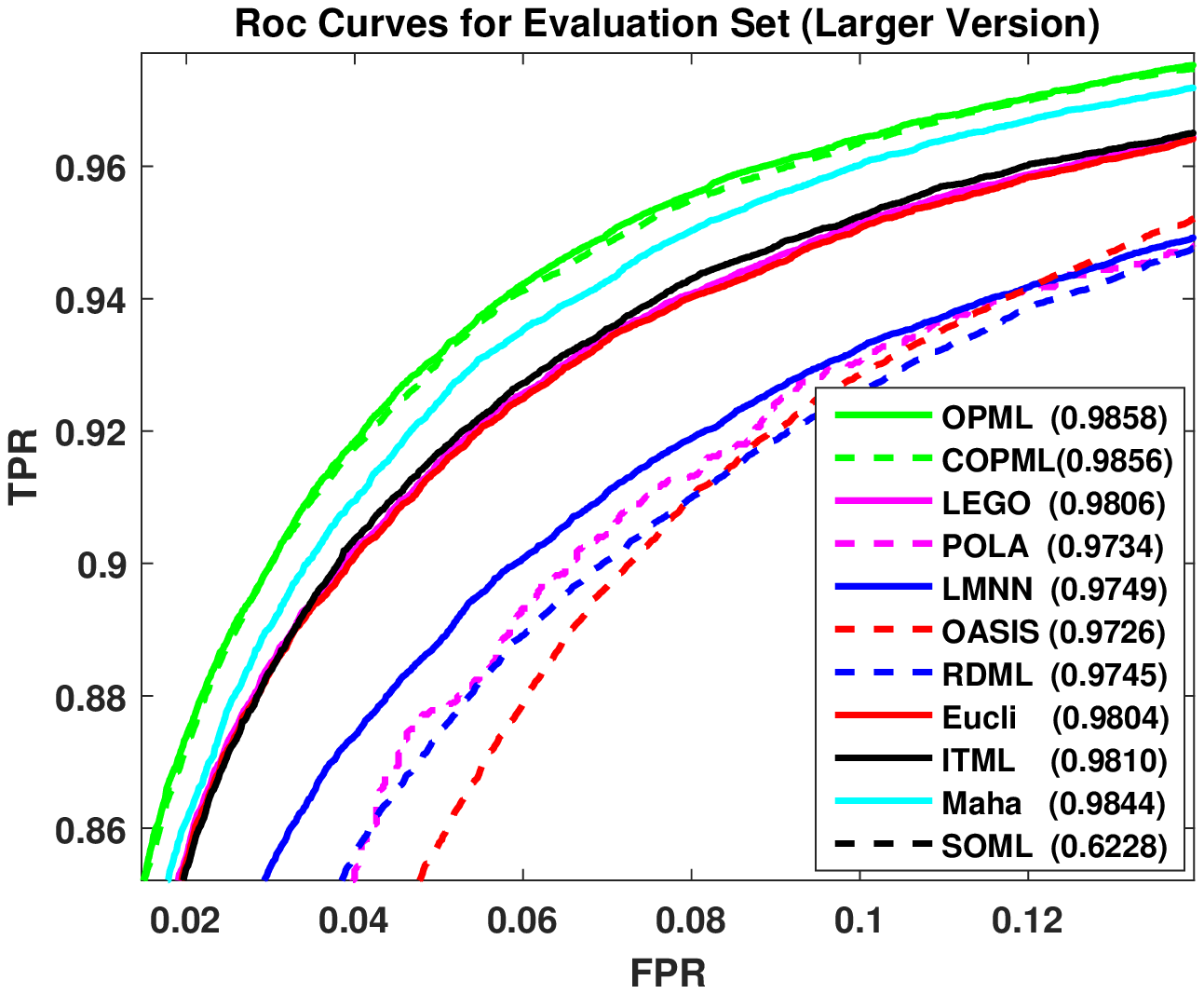}
     \caption{ROC Curves of development set (left column) and evaluation set (right column) on the PubFig dataset. first row: attribute features; second row: deep features. AUC value of each method is presented in bracket}
     \label{roc-Figure}
\end{figure}

\begin{figure}[!htbp]
     \centering
     \subfigure{ \includegraphics[width = 0.23\textwidth]{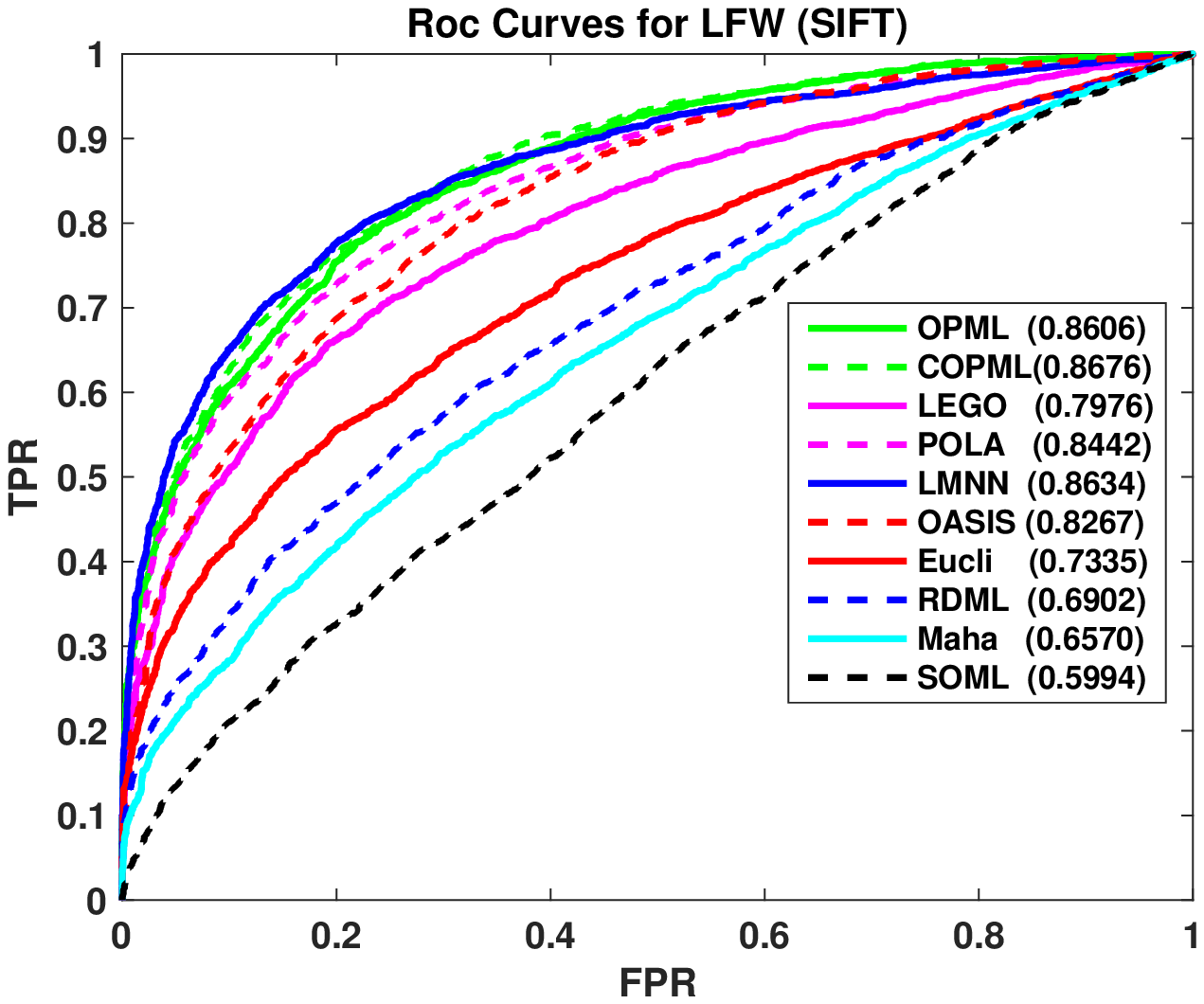} }
     \subfigure{ \includegraphics[width = 0.23\textwidth]{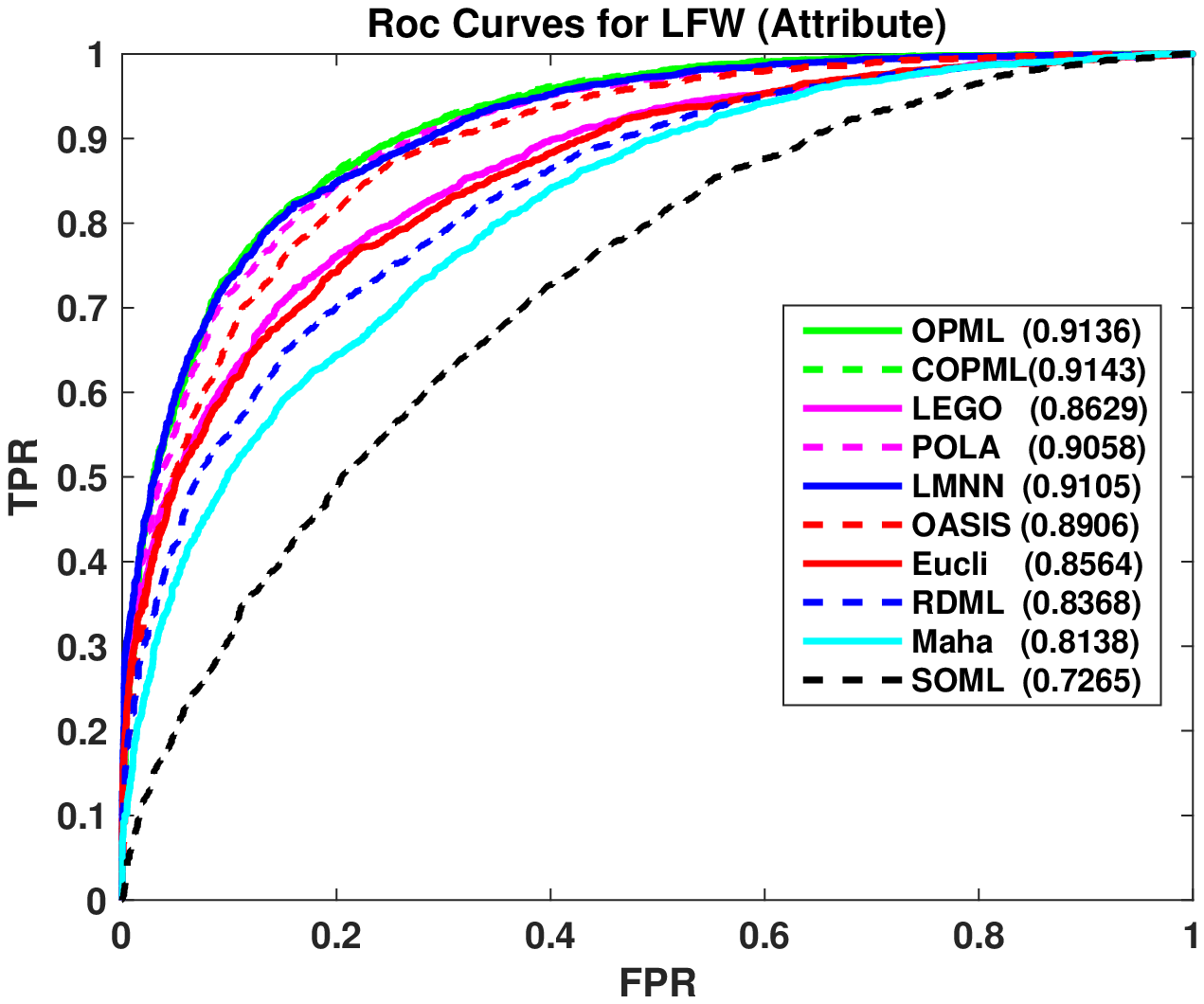}}
     \caption{ROC Curves of our methods and contrastive methods on the LFW dataset. left: sift features; right: attribute features. AUC value of each method is presented in bracket}
     \label{roc-Figure-lfw}
\end{figure}
\subsection{Face Verification: LFW}

For face verification, we also evaluate our methods on the Labeled Faces in the Wild Database (LFW) \cite{LFWTech}. LFW is a widely used face verification benchmark with unconstrained images, which contains 13233 images of 5749 individuals. This dataset has two views: View 1 is used for development purposes (containing a training set and a test set); And, View 2 is taken as evaluation benchmark for comparison (i.e., a 10-fold cross-validation set). There are two forms of configuration in both views, that is, image restricted configuration and image unrestricted configuration. In the first formulation, the training information is restricted to the provided image pairs and additional information such as actual name information can not be used. In other words, we can only use the pairwise images for training without any label information can be used at all. While, in the second formulation, the actual name information (i.e., label information) can be used and as many pairs or triplets can be formulated as one desires. No matter which configuration we choose, the test procedure is the same (i.e., using pairwise images for testing).  In order to simulate the real online environment and because our methods and some methods (eg., OASIS \cite{chechik2010large}, SOML \cite{gao2014soml}) are triplet-based methods, we adopt the image unrestricted configuration to construct the experiment. We use View 1 for parameter tuning and then evaluate the performance of all the algorithms on each fold (300 intra- and 300 extra-personal pairs) in View 2. Other settings are similar with the ones on the PubFig dataset.

In this experiment, we adopt two types of features (i.e., SIFT features and attribute features) to represent each face image, respectively. The SIFT features are provided by Guillaumin et al. \cite{guillaumin2009you} by extracting SIFT descriptors \cite{lowe2004distinctive} at 9 fixed facial landmarks detected on a face, over three scales. Then we perform PCA algorithm to reduce the original 3456-dimensional feature to a 100-dimensional feature. Like PubFig, the attribute features of LFW are 73-dimensional 'high-level' features describing the nameable attributes of a face image \cite{kumar2009attribute}. To evaluate our methods and the contrastive methods, we report the ROC curves and AUC values of the corresponding methods (see Fig. \ref{roc-Figure-lfw}). The results of ITML \cite{davis2007information} aren't displayed for its difficulty of convergence in the training data. We can see that the proposed COPML method can achieve the-state-of-the-art performance compared with the contrastive metric learning methods. Especially, when using SIFT features, our methods can significantly improve the AUC value over the Euclidean distance by $13\%$ ($5.8\%$ with attribute features), showing the validity of the proposed methods. It is worth noting that some metric learning methods cannot even improve over the Euclidean distance, which has happened on the PubFig dataset. The reason why LMNN cannot achieve the best performance may be over-fitting for lacking of regularization.

\subsection{Abnormal Event Detection in Videos}

The performance of the proposed methods is also evaluated on UMN dataset for abnormal event detection. UMN dataset contains 3 different scenes with 7739 frames in total: Scene1 (1453 frames), Scene2 (4144 frames) and Scene3 (2142 frames). In UMN dataset, people walking around is considered as normal, while people running away is regarded as abnormal. The resolution of the video is $320 \times 240$. We divide each frame into $5 \times 4$ non-overlapping $64 \times 60$ patches. For each patch, the MHOF (Multi-scale Histogram of Optical Flow) feature \cite{cong2011sparse} was extracted from every two successive frames. The MHOF is a 16-dimensional feature, which can capture both motion direction and motion energy. For integrating the multi-patches features, we combine features from all patches in each frame, and form a 320-dimensional feature. For each scene, we perform 2-fold cross validation for evaluation. The distance metric is learnt from the training data in online way, then we use the SVM classifier to classify the testing frames after feature transformation by using the learned metric $\bm{L}$.

Table \ref{UMN-AUC} reports the AUC of all the methods. We can notice that our methods is very effective and competitive, when compared with other methods. Fig. \ref{UMN-Figure} exhibits the sample frames of normal and abnormal events in the 3 scenes respectively (top row), and shows the abnormal event detection results of our method (COPML) in the indication bars (green/red indicates normal/abnormal event). It's worth mentioning that in this experiment, COPML performs better than OPML, because the video data has the cold start issue especially at the beginning.
\begin{figure*}[!htbp]
     \centering
     \includegraphics[width = 1\textwidth]{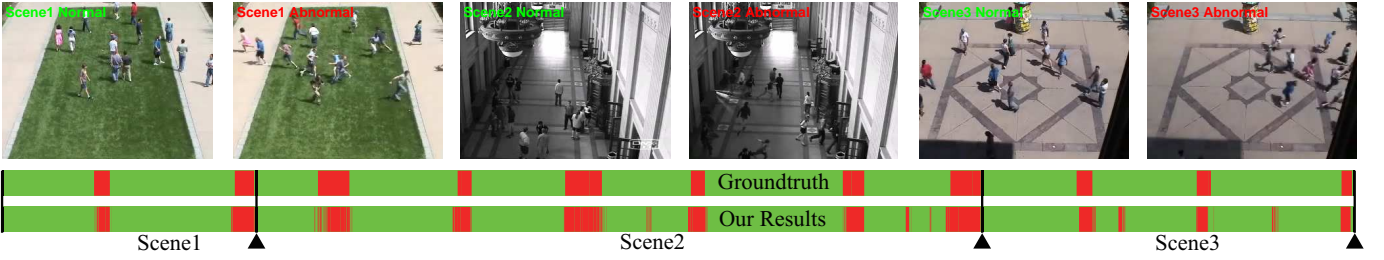}
     \caption{Global abnormal event detection results of our method COPML and the ground truth on the UMN dataset}
     \label{UMN-Figure}
\end{figure*}

\begin{table}[!htbp]
\begin{center}
\centering
\caption{Quantitative comparison of our methods with other abnormal event detection methods on 3 scenes of UMN dataset individually with AUC criterion}
\vspace{5pt}
\begin{tabular}{|l|c|}
\hline
Method & AUC\\
\hline\hline
Optical flow            \cite{mehran2009abnormal} & $0.84$ (average)\\
Social Force            \cite{mehran2009abnormal} & $0.96$ (average)\\
Chaotic Invariants      \cite{wu2010chaotic}      & $0.99$ (average)\\
LSA                     \cite{saligrama2012video} & $0.985$(average)\\
STCOG                   \cite{shi2010real}        & $0.936/0.776/0.966$\\
Sparse                  \cite{cong2011sparse}     & $0.995/0.975/0.964$\\
MP-MIDL                 \cite{huo2014multi}       & $0.99\ /0.98\ /0.99\ $\\
SVDD-based              \cite{zhang2016combining} & $0.993/0.969/0.988$\\
\hline
\textbf{OPML}                                     & $\textbf{0.993}/\textbf{0.983}/\textbf{0.973}$\\
\textbf{COPML}                                    & $\textbf{0.995}/\textbf{0.989}/\textbf{0.977}$\\
\hline
\end{tabular}
\label{UMN-AUC}
\end{center}
\end{table}

\subsection{COPML for Cold Start}
We can observe that in the case free of the cold start issue (e.g., UCI data classification, face verification), OPML and COPML can obtain comparable results, while in the case with cold start issue (e.g., abnormal event detection in videos), COPML is better than OPML. To further test the performance of COPML on an extreme case with cold start issue, we construct several datasets with specified structure to verify the different performance of COPML and OPML. Three datasets were picked from the UCI repository: (1) Image Segmentation (seg for short), with 7 classes, 19 features and 2310 samples; (2) EEG Eye State (eeg for short), with 2 classes, 15 features and 14980 samples; (3) Sensorless (sen for short), with 11 classes, 49 features and 58509 samples.

For each dataset, the samples from different classes are divided into disjoint 10/5/2 parts, then different parts of different classes are crosswise put together to construct a new dataset. Afterwards, the new dataset is divided into 2 folds. The first fold is used for training and the second fold is used for testing. As the previous setting for classification, we take a $k$-NN (k=5) classifier to get the final test results, shown in Table \ref{ver-errorrates}. The results prove that COPML performs better than OPML when the data has the cold start issue. Since when cold start occurs, COPML will incorporate both the pair and triplet information, instead of only using triplet in OPML.

\begin{table}[!htbp]
\centering
\caption{Error rates on three UCI datasets}
\vspace{5pt}
\begin{tabular}{|lccc|}
  \hline
  Data       &Euclidean            &OPML                      &COPML \\
  \hline\hline
  seg-10     &$0.069\!\pm\!0.006$  &$0.062\!\pm\!0.006$       &$\bm{0.057\!\pm\!0.007}$\\
  seg-5      &$0.067\!\pm\!0.007$  &$0.062\!\pm\!0.007$       &$\bm{0.054\!\pm\!0.007}$\\
  seg-2      &$0.067\!\pm\!0.006$  &$0.064\!\pm\!0.007$       &$\bm{0.059\!\pm\!0.007}$\\
  \hline
  eeg-10     &$0.185\!\pm\!0.004$  &$0.181\!\pm\!0.004$       &$\bm{0.161\!\pm\!0.023}$\\
  eeg-5      &$0.213\!\pm\!0.004$  &$0.205\!\pm\!0.005$       &$\bm{0.185\!\pm\!0.018}$\\
  eeg-2      &$0.185\!\pm\!0.004$  &$0.178\!\pm\!0.007$       &$\bm{0.178\!\pm\!0.007}$\\
  \hline
  sen-10     &$0.190\!\pm\!0.002$  &$0.093\!\pm\!0.036$       &$\bm{0.071\!\pm\!0.011}$\\
  sen-5      &$0.196\!\pm\!0.002$  &$0.097\!\pm\!0.032$       &$\bm{0.082\!\pm\!0.014}$\\
  sen-2      &$0.190\!\pm\!0.006$  &$0.067\!\pm\!0.020$       &$\bm{0.063\!\pm\!0.015}$\\
  \hline
\end{tabular}
\label{ver-errorrates}
\end{table}
\section{Conclusion}
\label{conclusions}
We propose a one-pass closed-form solution for OML, namely OPML. It employs the one-pass triplet construction for fast triplet generation, together with a closed-form solution to update the metric with the new coming sample at each time step. Also, for cold start issue, COPML, an extended version of OPML is developed. The major advantages of our methods are: OPML and COPML are easy to implement. Also, OPML and COPML are very scalable with low space (i.e., $O(d)$) and time (i.e., $O(d^2)$) complexity. In the experiments, we show that our methods can obtain superior performance on three typical tasks, compared with the state-of-the-art methods.

\appendices
\section{Proof of Theorem 2}
\begin{proof}
Recall that the metric update formula of OPML is
\begin{equation}\label{fun-p1}
   \bm{L}_t=\left\{
   \begin{array}{ll}
   \bm{L}_{t-1}(\bm{I}+\gamma\bm{A}_t)^{-1} & [z]_+>0\\
   \\
   \bm{L}_{t-1} & [z]_+=0.
   \end{array}
   \right.
\end{equation}
According to the Theorem 1, we can obtain that,
\begin{equation}\label{fun-p2}
     (\bm{I}+\gamma\bm{A}_t)^{-1}=\bm{I}-\frac{1}{\eta+\beta}[\eta\gamma\bm{A}_t-(\gamma\bm{A}_t)^2],
\end{equation}
where $\eta=1+tr(\gamma\bm{A}_t)$, $\beta=\frac{1}{2}[(tr(\gamma\bm{A}_t))^2-tr(\gamma\bm{A}_t)^2]$. Here, we only consider the case that $[z]_+>0$. Then at $t$-th time step, the learned metric $\bm{L}_t$ of one-pass strategy can be expressed as below,
\begin{equation}\label{fun-p3}
   \bm{L}_t=\bm{L}_0(\bm{I}+\gamma\bm{A}_1)^{-1}(\bm{I}+\gamma\bm{A}_{2})^{-1}\cdots(\bm{I}+\gamma\bm{A}_t)^{-1}.
\end{equation}
Note that the batch triplet construction strategy here is considered in an online manner, that is to say, for each sample $\bm{x}_t$ at the $t$-th time step, all past samples are stored to construct a triplet with $\bm{x}_t$ (i.e., each triplet contains this $\bm{x}_t$). Similar to $\bm{L}_t$, the learned metric $\bm{L}_t^\ast$ of the batch strategy (at $t$-th time step, $C_i|_{i=1}^t$ triplets can be constructed) can be denoted as follows,
\begin{equation}
   \bm{L}_t^\ast\!=\!\bm{L}_0^\ast\prod_{i=1}^{C_1}(\bm{I}\!+\!\gamma\bm{A}_{1_i})^{-1}\prod_{i=1}^{C_{2}}(\bm{I}\!+\!\gamma\bm{A}_{2_i})^{-1}\cdots\prod_{i=1}^{C_t}(\bm{I}\!+\!\gamma\bm{A}_{t_i})^{-1}.
\label{fun-p4}
\end{equation}
Let $\langle\bm{x}_1, \bm{x}_{p_1}, \bm{x}_{q_1}\rangle,\dots,\langle\bm{x}_t, \bm{x}_{p_t}, \bm{x}_{q_t}\rangle$ be the sequence of triplets constructed by the proposed one-pass strategy, which is contained in the sequence of triplets constructed by the batch strategy. If we let the $\bm{L}^\ast$ learn on the sequence of triplets constructed by the one-pass strategy first, the Eq. (\ref{fun-p4}) can be reorganized as below,
\begin{equation}\label{fun-p5}
 \begin{split}
    \bm{L}_t^\ast\!=\!&\bm{L}_0^\ast(\bm{I}\!+\!\gamma\bm{A}_1)^{-1}\!\cdots\!(\bm{I}\!+\!\gamma\bm{A}_t)^{-1}\!\cdot\!\prod_{i=1}^{C_1+\cdots+C_t-t}(\bm{I}\!+\!\gamma\bm{A}_i)^{-1}\\
                  &\small\text{($\bm{L}_t^\ast$ learn on the sequence of $\bm{L}_t$ first)}\\
                 =&\bm{L}_t\cdot\prod_{i=1}^{C_1+\cdots+C_t-t}(\bm{I}+\gamma\bm{A}_i)^{-1}\\
                  &\small\text{($\bm{L}_0$ and $\bm{L}_0^\ast$ are both initialized as identity matrices)}\\
                 =&\bm{L}_t\cdot\prod_{i=1}^{C_1+\cdots+C_t-t}(\bm{I}+\bm{B})\\
                  &\small\text{(by Theorem 1, where $\scriptstyle\bm{B}=\frac{1}{\eta+\beta}\Big[(\gamma\bm{A}_i)^2-\eta\gamma\bm{A}_i$\Big] )}\\
                 =&\bm{L}_t\Big[\bm{I}+\sum_{i=1}^{C_N}\bm{B}_i+\sum_{i=1,j=1,i<j}^{C_N}\bm{B}_i\bm{B}_j+\cdots+\prod_{i=1}^{C_N}\bm{B}_i \Big]\\
                  &\small\text{(where $C_N=C_1+\cdots+C_t-t$)}.\\
 \end{split}
\end{equation}
Then we can calculate that
\begin{small}
\begin{equation}
   \begin{split}
     \|\bm{L}_t\!-\!\bm{L}_t^\ast\|_F&\!=\!\Big\|\bm{L}_t\Big[\sum_{i=1}^{C_N}\bm{B}_i\!+\!\sum_{i=1,j=1,i<j}^{C_N}\bm{B}_i\bm{B}_j\!+\!\cdots\!+\!\prod_{i=1}^{C_N}\bm{B}_i\Big]\Big\|_F\\
                                 &\!\le\!\|\bm{L}_t\|_F\!\cdot\!\Big\|\sum_{i=1}^{C_N}\bm{B}_i\!+\!\sum_{i=1,j=1,i<j}^{C_N}\bm{B}_i\bm{B}_j\!+\!\!\cdots\!\!+\!\prod_{i=1}^{C_N}\bm{B}_i\Big\|_F.
   \end{split}
\label{fun-p6}
\end{equation}
\end{small}
Recall that $\bm{A}_t=\bm{M}_1-\bm{M}_2=(\bm{x}_t-\bm{x}_p)(\bm{x}_t-\bm{x}_p)^T-(\bm{x}_t-\bm{x}_q)(\bm{x}_t-\bm{x}_q)^T\in\mathbb{R}^{d\times d}$, which is a symmetry square matrix. According to the definition of Frobenius norm,
\begin{equation}
    \|\bm{A}_t\|_F=\sqrt{\sum_{i=1}^d\sum_{i=1}^d|a_{ij}|^2}=\sqrt{\sum_{i=1}^d\sigma_i^2},
\label{fun-p7}
\end{equation}
where $\sigma_i$ are the singular values of $\bm{A}_t$, which are equal to the eigenvalues of $\bm{A}_t$. According to Lemma 1, $-\lambda_\text{max}(\bm{M}_2)\le\lambda(\bm{A}_t)\le\lambda_\text{max}(\bm{M}_1)$, where $\lambda(\bm{A}_t)$ denotes the eigenvalue of $\bm{A}_t$, and $\lambda_\text{max}(\bm{M})$ indicates the maximum eigenvalue of $\bm{M}$. Assuming that $\|\bm{x}_t\|_2\le R$, then $\lambda_\text{max}(\bm{M}_1)$ belongs to the range of $[0, 4R^2]$. And since the rank of $\bm{A}_t$ is 2 (which has been proved in section \ref{OPML-section}), there are at most two nonzero eigenvalues. Thus we can easily obtain that $\|\bm{A}\|_F\le4\sqrt{2}R^2$. Hence,
\begin{equation}
   \begin{split}
    \|\bm{B}\|_F&=\|\frac{\gamma^2}{\eta+\beta}\bm{A}_t^2+\frac{\eta\gamma}{\eta+\beta}(-\bm{A}_t)\|_F\\
                &\le\|\frac{\gamma^2}{\eta+\beta}\bm{A}_t^2\|_F+\|\frac{\eta\gamma}{\eta+\beta}(-\bm{A}_t)\|_F\\
                &\le\Big|\frac{\gamma^2}{\eta+\beta}\Big|\cdot\|\bm{A}_t\|_F\cdot\|\bm{A}_t\|_F+\Big|\frac{\eta\gamma}{\eta+\beta}\Big|\cdot\|\bm{A}_t\|_F\\
                &\le32\Big|\frac{\gamma^2}{\eta+\beta}\Big|R^4+4\sqrt{2}\Big|\frac{\eta\gamma}{\eta+\beta}\Big|R^2.\\
   \end{split}
\label{fun-p8}
\end{equation}
Then, we can also calculate the range of $\eta$ and $\beta$ respectively.
\begin{equation}
   \begin{split}
    \eta &=1+tr(\gamma\bm{A}_t)\\
         &=1+\gamma\cdot tr(\bm{A}_t)\\
         &=1+\gamma\Big[(\bm{x}_t-\bm{x}_p)^T(\bm{x}_t-\bm{x}_p)-(\bm{x}_t-\bm{x}_q)^T(\bm{x}_t-\bm{x}_q)\Big]\\
         &=1+\gamma\Big[\|\bm{x}_p\|_2^2-\|\bm{x}_q\|_2^2-2\|\bm{x}_t^T\|_2\cdot\|\bm{x}_p\|_2\cos\theta_1\\
         &+2\|\bm{x}_t^T\|_2\cdot\|\bm{x}_q\|_2\cos\theta_2\Big].
   \end{split}
\label{fun-p9}
\end{equation}

Since the range of $\gamma$ is $(0,\frac{1}{4})$, and $0\!<\!\|\bm{x}_t\|_2\!\le\!R$, we can calculate to get the range of $tr(\gamma\bm{A}_t)$ (i.e., $(-\frac{5}{4}R^2,\frac{5}{4}R^2)$), and the range of $\eta$ which is $(1\!-\!\frac{5}{4}R^2,1\!+\!\frac{5}{4}R^2)$. Recall that,
\begin{equation}
   \begin{split}
       \beta=&\frac{1}{2}\left[(tr(\gamma\bm{A}_t))^2-tr(\gamma\bm{A}_t)^2\right]\\
             &\small\text{( by the rule of $\scriptstyle tr(c\bm{A})=c\cdot tr(\bm{A})$ )}\\
            =&\frac{1}{2}\left[(tr(\gamma\bm{A}_t))^2-\gamma^2tr(\bm{A}_t^2)\right]\\
            &\small\text{( by the rule of $\scriptstyle tr(\bm{A}^k)=\sum_i\lambda_i^k$,}\\
            &\small\text{where $\lambda_i$ is the eigenvalue of $\bm{A}$)}\\
            =&\frac{1}{2}\Big[(tr(\gamma\bm{A}_t))^2-\gamma^2\sum_{i=1}^d\lambda_i^2\Big]\\
            &\small\text{( by the rule of $\scriptstyle\|\bm{A}_t\|_F=\sqrt{\sum_{i=1}^d\lambda_i^2}$)}\\
            =&\frac{1}{2}\Big[(tr(\gamma\bm{A}_t))^2-\gamma^2\|\bm{A}_t\|_F^2\Big]\\
   \end{split}
\label{fun-p10}
\end{equation}
For $0\!<\!\|\bm{A}_t\|_F^2\!\le\!32R^4$ and $-\frac{5}{4}R^2\!<\!tr(\gamma\bm{A}_t)\!<\!\frac{5}{4}R^2$, the range of $\beta$ is $(-R^4, \frac{25}{32}R^4)$.
\end{proof}

\section{Proof of Theorem 3}
\begin{proof}
By applying the one-pass triplet construction strategy, at the $t$-th time step, we can obtain one triplet $\langle\bm{x}_t, \bm{x}_p, \bm{x}_q\rangle$. While in the batch construction (all past samples will be stored), we can get a triplet set $\{\langle\bm{x}_t, \bm{x}_{p_i}, \bm{x}_{q_i}\rangle\}|_{i=1}^C$.
The average loss of these two strategies can be expressed as follows:
\begin{equation}
  \begin{aligned}
   &\Psi_1=\Big\lbrack1+\|\bm{L}(\bm{x}_t-\bm{x}_p)\|_2^2-\|\bm{L}(\bm{x}_t-\bm{x}_q)\|_2^2\Big\rbrack_+\\
   &\Psi_2=\frac{1}{C}\sum_{i=1}^{C}\Big[1\!+\!\|\bm{L}^\ast(\bm{x}_t\!-\!\bm{x}_{p_i})\|_2^2\!-\!\|\bm{L}^\ast(\bm{x}_t\!-\!\bm{x}_{q_i})\|_2^2\Big]_+,
  \end{aligned}
\end{equation}
where $[z]_+=\max(0,z)$,  namely the hinge loss $\mathcal{G}((\bm{x}_t, \bm{x}_p, \bm{x}_q);\bm{L})$. For $\Psi_1$, we only consider the case that $z\ge0$, which exactly affects the updating of the metric $\bm{L}$. However, in $\Psi_2$, some losses may be negative. Thus,
\begin{equation}
  \begin{aligned}
   &\Psi_1=1+\|\bm{L}(\bm{x}_t-\bm{x}_p)\|_2^2-\|\bm{L}(\bm{x}_t-\bm{x}_q)\|_2^2\\
   &\Psi_2\le\frac{1}{C}\sum_{i=1}^{C}\Big[1+\|\bm{L}^\ast(\bm{x}_t-\bm{x}_{p_i})\|_2^2-\|\bm{L}^\ast(\bm{x}_t-\bm{x}_{q_i})\|_2^2\Big].
  \end{aligned}
\end{equation}
Then we calculate the difference between the losses of these two strategies. That is,
\begin{equation}
  \begin{split}
  \Psi_1-\Psi_2&\le\frac{1}{C}\sum_{i=1}^C\Delta\\
                       &\le\frac{1}{C}\sum_{i=1}^C\Big[\|\bm{L}(\bm{x}_t-\bm{x}_p)\|_2^2-\|\bm{L}^\ast(\bm{x}_t-\bm{x}_{p_i})\|_2^2\\
                       &+\|\bm{L}^\ast(\bm{x}_t-\bm{x}_{q_i})\|_2^2-\|\bm{L}(\bm{x}_t-\bm{x}_q)\|_2^2\Big].
  \end{split}
\end{equation}
For simplicity, we just analysis the $\Delta$. By applying the rule of dot product (i.e., $\bm{x}^T\bm{y}=\|\bm{x}\|\cdot\|\bm{y}\|\cdot\cos\theta$), $\Delta$ can be expanded as follows,
\begin{equation}
  \begin{split}
   \Delta&=\|\bm{L}\bm{x}_p\|_2^2-\|\bm{L}^\ast\bm{x}_{p_i}\|_2^2+\|\bm{L}^\ast\bm{x}_{q_i}\|_2^2-\|\bm{L}\bm{x}_q\|_2^2\\
                 &-2\|\bm{L}\bm{x}_t\|\!\cdot\!\|\bm{L}\bm{x}_p\|\!\cdot\!\cos\theta_1\!+\!2\|\bm{L}^\ast\bm{x}_t\|\!\cdot\!\|\bm{L}^\ast\bm{x}_{p_i}\|\!\cdot\!\cos\theta_2\\
                 &-2\|\bm{L}^\ast\bm{x}_t\|\cdot\|\bm{L}^\ast\bm{x}_{q_i}\|\cdot\cos\theta_3+2\|\bm{L}\bm{x}_t\|\cdot\|\bm{L}\bm{x}_q\|\cdot\cos\theta_4.
  \end{split}
\end{equation}
In order to simplify the expression, we set
\begin{equation}
  \begin{aligned}
   &\textcircled{1}=-2\|\bm{L}\bm{x}_t\|\cdot\|\bm{L}\bm{x}_p\|\cdot\cos\theta_1\\
   &\textcircled{2}=2\|\bm{L}^\ast\bm{x}_t\|\cdot\|\bm{L}^\ast\bm{x}_{p_i}\|\cdot\cos\theta_2\\
   &\textcircled{3}=-2\|\bm{L}^\ast\bm{x}_t\|\cdot\|\bm{L}^\ast\bm{x}_{q_i}\|\cdot\cos\theta_3\\
   &\textcircled{4}=2\|\bm{L}\bm{x}_t\|\cdot\|\bm{L}\bm{x}_q\|\cdot\cos\theta_4.
  \end{aligned}
\end{equation}
Assuming that the angle $\theta$ between two samples coming from the same class is very small after the transformation of $\bm{L}$ or $\bm{L}^\ast$ (i.e., $\cos\theta=\alpha, \alpha\ge0$ and $\alpha$ is close to 1), while $\theta$ is very large otherwise (i.e., $\cos\theta=-\xi, \xi\ge0$ and $\xi$ is close to 1). Thus,
\begin{equation}
  \begin{aligned}
   0&\le\cos\theta_1\le\alpha\\
   0&\le\cos\theta_2\le\alpha\\
   -\xi&\le\cos\theta_3\le0\\
   -\xi&\le\cos\theta_4\le0,
  \end{aligned}
\end{equation}
where $0\le\alpha\le1$, $0\le\xi\le1$ and both of them are close to 1. Then, we can obtain that,
\begin{equation}
  \begin{aligned}
   -2\alpha\|\bm{L}\bm{x}_t\|\cdot\|\bm{L}\bm{x}_p\|&\le\textcircled{1}\le0\\
   0&\le\textcircled{2}\le2\alpha\|\bm{L}^\ast\bm{x}_t\|\cdot\|\bm{L}^\ast\bm{x}_{p_i}\|\\
   0&\le\textcircled{3}\le2\xi\|\bm{L}^\ast\bm{x}_t\|\cdot\|\bm{L}^\ast\bm{x}_{q_i}\|\\
   -2\xi\|\bm{L}\bm{x}_t\|\cdot\|\bm{L}\bm{x}_q\|&\le\textcircled{4}\le0.
  \end{aligned}
\end{equation}
Here, we only consider the upper bound of $\Delta$,
\begin{small}
\begin{equation}
  \begin{split}
   \Delta&\le\|\bm{L}\bm{x}_p\|_2^2-\|\bm{L}^\ast\bm{x}_{p_i}\|_2^2+\|\bm{L}^\ast\bm{x}_{q_i}\|_2^2-\|\bm{L}\bm{x}_q\|_2^2\\
   &\quad+2\alpha\|\bm{L}^\ast\bm{x}_t\|\cdot\|\bm{L}^\ast\bm{x}_{p_i}\|+2\xi\|\bm{L}^\ast\bm{x}_t\|\cdot\|\bm{L}^\ast\bm{x}_{q_i}\|\\
                 &\le\|\bm{L}\bm{x}_p\|_2^2+\|\bm{L}^\ast\bm{x}_{q_i}\|_2^2+2\alpha\|\bm{L}^\ast\bm{x}_t\|\cdot\|\bm{L}^\ast\bm{x}_{p_i}\|\\
                 &\quad+2\xi\|\bm{L}^\ast\bm{x}_t\|\cdot\|\bm{L}^\ast\bm{x}_{q_i}\|.
  \end{split}
\end{equation}
\end{small}
According to the property of compatible norms, that is,
\begin{equation}
   \|\bm{A}\bm{x}\|_2\le\|\bm{A}\|_F\cdot\|\bm{x}\|_2.
\end{equation}
For assuming that $\|\bm{x}\|_2\le R$ (for all samples), $\|\bm{L}\|_F\le U$ and $\|\bm{L}^\ast\|_F\le U$, we can obtain that,
\begin{equation}
  \begin{aligned}
    \Delta&\le2(\alpha+\xi+1)R^2U^2\\
    \Psi_1-\Psi_2&\le2(\alpha+\xi+1)R^2U^2.
  \end{aligned}
\end{equation}
Thus, this theorem has be proved.
\end{proof}

\section{Proof of Theorem 4}
\begin{proof}
The regret can be defined (according to the definition of Chapter 3 in \cite{shalev2007online}) as below:
\begin{equation}\label{fun-p11}
     R(\bm{L}_\ast,T)=\sum_{t=1}^T\mathcal{G}_t(\bm{L}_t)-\sum_{t=1}^T\mathcal{G}_t(\bm{L}_\ast),
\end{equation}
where $\mathcal{G}_t(\bm{L})=\lbrack1+\Vert\bm{L}(\bm{x}_t-\bm{x}_p)\Vert_2^2-\Vert\bm{L}(\bm{x}_t-\bm{x}_q)\Vert_2^2  \rbrack_+$. Here, we also only consider the case that the loss is positive, which exactly affects the updating of the metric $\bm{L}$. However, one triplet which generates a positive loss with $\bm{L}$ may incur a negative loss with $\bm{L}_\ast$. Thus, after expanding,
\begin{equation}\label{fun-p12}
\begin{aligned}
     R(\bm{L}_\ast,T)&\le\sum_{t=1}^T\Big[\Vert\bm{L}_t(\bm{x}_t-\bm{x}_p)\Vert_2^2-\Vert\bm{L}_t(\bm{x}_t-\bm{x}_q)\Vert_2^2\\
     &-\Vert\bm{L}_\ast(\bm{x}_t-\bm{x}_p)\Vert_2^2+\Vert\bm{L}_\ast(\bm{x}_t-\bm{x}_q)\Vert_2^2\Big].
\end{aligned}
\end{equation}
In the similar way of proving the Theorem \ref{theorem3}, we can easily prove this theorem.
\end{proof}


\ifCLASSOPTIONcaptionsoff
  \newpage
\fi



%



\bibliographystyle{IEEEtran}
\bibliography{IEEEabrv,IEEEexample}

\begin{thebibliography}{10}
\providecommand{\url}[1]{#1}
\csname url@samestyle\endcsname
\providecommand{\newblock}{\relax}
\providecommand{\bibinfo}[2]{#2}
\providecommand{\BIBentrySTDinterwordspacing}{\spaceskip=0pt\relax}
\providecommand{\BIBentryALTinterwordstretchfactor}{4}
\providecommand{\BIBentryALTinterwordspacing}{\spaceskip=\fontdimen2\font plus
\BIBentryALTinterwordstretchfactor\fontdimen3\font minus
  \fontdimen4\font\relax}
\providecommand{\BIBforeignlanguage}[2]{{%
\expandafter\ifx\csname l@#1\endcsname\relax
\typeout{** WARNING: IEEEtran.bst: No hyphenation pattern has been}%
\typeout{** loaded for the language `#1'. Using the pattern for}%
\typeout{** the default language instead.}%
\else
\language=\csname l@#1\endcsname
\fi
#2}}
\providecommand{\BIBdecl}{\relax}
\BIBdecl

\bibitem{shalev2004online}
S.~Shalev-Shwartz, Y.~Singer, and A.~Y. Ng, ``Online and batch learning of
  pseudo-metrics,'' in \emph{Proceedings of the twenty-first International
  Conference on Machine Learning (ICML)}.\hskip 1em plus 0.5em minus
  0.4em\relax ACM, 2004, p.~94.

\bibitem{chechik2010large}
G.~Chechik, V.~Sharma, U.~Shalit, and S.~Bengio, ``Large scale online learning
  of image similarity through ranking,'' \emph{The Journal of Machine Learning
  Research}, vol.~11, pp. 1109--1135, 2010.

\bibitem{jain2009online}
P.~Jain, B.~Kulis, I.~S. Dhillon, and K.~Grauman, ``Online metric learning and
  fast similarity search,'' in \emph{Advances in Neural Information Processing
  Systems (NIPS)}, 2009, pp. 761--768.

\bibitem{jin2009regularized}
R.~Jin, S.~Wang, and Y.~Zhou, ``Regularized distance metric learning: Theory
  and algorithm,'' in \emph{Advances in Neural Information Processing Systems
  (NIPS)}, 2009, pp. 862--870.

\bibitem{davis2007information}
J.~V. Davis, B.~Kulis, P.~Jain, S.~Sra, and I.~S. Dhillon,
  ``Information-theoretic metric learning,'' in \emph{Proceedings of the 24th
  International Conference on Machine Learning (ICML)}.\hskip 1em plus 0.5em
  minus 0.4em\relax ACM, 2007, pp. 209--216.

\bibitem{crammer2006online}
K.~Crammer, O.~Dekel, J.~Keshet, S.~Shalev-Shwartz, and Y.~Singer, ``Online
  passive-aggressive algorithms,'' \emph{The Journal of Machine Learning
  Research}, vol.~7, pp. 551--585, 2006.

\bibitem{gao2014soml}
X.~Gao, S.~C. Hoi, Y.~Zhang, J.~Wan, and J.~Li, ``Soml: Sparse online metric
  learning with application to image retrieval,'' in \emph{Proceedings of the
  Twenty-Eighth AAAI Conference on Artificial Intelligence (AAAI)}, 2014.

\bibitem{xia2014online}
H.~Xia, S.~C. Hoi, R.~Jin, and P.~Zhao, ``Online multiple kernel similarity
  learning for visual search,'' \emph{IEEE Transactions on Pattern Analysis and
  Machine Intelligence}, vol.~36, no.~3, pp. 536--549, 2014.

\bibitem{kunapuli2012mirror}
G.~Kunapuli and J.~Shavlik, ``Mirror descent for metric learning: A unified
  approach,'' in \emph{Machine Learning and Knowledge Discovery in
  Databases}.\hskip 1em plus 0.5em minus 0.4em\relax Springer, 2012, pp.
  859--874.

\bibitem{weinberger2009distance}
K.~Q. Weinberger and L.~K. Saul, ``Distance metric learning for large margin
  nearest neighbor classification,'' \emph{The Journal of Machine Learning
  Research}, vol.~10, pp. 207--244, 2009.

\bibitem{shaw2011learning}
B.~Shaw, B.~Huang, and T.~Jebara, ``Learning a distance metric from a
  network,'' in \emph{Advances in Neural Information Processing Systems
  (NIPS)}, 2011, pp. 1899--1907.

\bibitem{qian2015efficient}
Q.~Qian, R.~Jin, J.~Yi, L.~Zhang, and S.~Zhu, ``Efficient distance metric
  learning by adaptive sampling and mini-batch stochastic gradient descent
  (sgd),'' \emph{Machine Learning}, vol.~99, no.~3, pp. 353--372, 2015.

\bibitem{stauffer2000learning}
C.~Stauffer and W.~E.~L. Grimson, ``Learning patterns of activity using
  real-time tracking,'' \emph{IEEE Transactions on Pattern Analysis and Machine
  Intelligence}, vol.~22, no.~8, pp. 747--757, 2000.

\bibitem{shi2010real}
Y.~Shi, Y.~Gao, and R.~Wang, ``Real-time abnormal event detection in
  complicated scenes,'' in \emph{20th International Conference on Pattern
  Recognition (ICPR)}.\hskip 1em plus 0.5em minus 0.4em\relax IEEE, 2010, pp.
  3653--3656.

\bibitem{gao2013one}
W.~Gao, R.~Jin, S.~Zhu, and Z.-H. Zhou, ``One-pass auc optimization,'' in
  \emph{Proceedings of the 30th International Conference on Machine Learning
  (ICML)}.\hskip 1em plus 0.5em minus 0.4em\relax ACM, 2013, pp. 906--914.

\bibitem{miller1981inverse}
K.~S. Miller, ``On the inverse of the sum of matrices,'' \emph{Mathematics
  Magazine}, pp. 67--72, 1981.

\bibitem{kumar2009attribute}
N.~Kumar, A.~C. Berg, P.~N. Belhumeur, and S.~K. Nayar, ``Attribute and simile
  classifiers for face verification,'' in \emph{IEEE 12th International
  Conference on Computer Vision (ICCV)}.\hskip 1em plus 0.5em minus 0.4em\relax
  IEEE, 2009, pp. 365--372.

\bibitem{Parkhi15}
O.~M. Parkhi, A.~Vedaldi, and A.~Zisserman, ``Deep face recognition,'' in
  \emph{British Machine Vision Conference (BMVC)}, vol.~1, no.~3, 2015, p.~6.

\bibitem{LFWTech}
G.~B. Huang, M.~Ramesh, T.~Berg, and E.~Learned-Miller, ``Labeled faces in the
  wild: A database for studying face recognition in unconstrained
  environments,'' University of Massachusetts, Amherst, Tech. Rep. 07-49,
  October 2007.

\bibitem{guillaumin2009you}
M.~Guillaumin, J.~Verbeek, and C.~Schmid, ``Is that you? metric learning
  approaches for face identification,'' in \emph{IEEE 12th International
  Conference on Computer Vision (ICCV)}.\hskip 1em plus 0.5em minus 0.4em\relax
  IEEE, 2009, pp. 498--505.

\bibitem{lowe2004distinctive}
D.~G. Lowe, ``Distinctive image features from scale-invariant keypoints,''
  \emph{International journal of computer vision}, vol.~60, no.~2, pp. 91--110,
  2004.

\bibitem{cong2011sparse}
Y.~Cong, J.~Yuan, and J.~Liu, ``Sparse reconstruction cost for abnormal event
  detection,'' in \emph{IEEE Conference on Computer Vision and Pattern
  Recognition (CVPR)}.\hskip 1em plus 0.5em minus 0.4em\relax IEEE, 2011, pp.
  3449--3456.

\bibitem{mehran2009abnormal}
R.~Mehran, A.~Oyama, and M.~Shah, ``Abnormal crowd behavior detection using
  social force model,'' in \emph{IEEE Conference on Computer Vision and Pattern
  Recognition (CVPR)}.\hskip 1em plus 0.5em minus 0.4em\relax IEEE, 2009, pp.
  935--942.

\bibitem{wu2010chaotic}
S.~Wu, B.~E. Moore, and M.~Shah, ``Chaotic invariants of lagrangian particle
  trajectories for anomaly detection in crowded scenes,'' in \emph{IEEE
  Conference on Computer Vision and Pattern Recognition (CVPR)}.\hskip 1em plus
  0.5em minus 0.4em\relax IEEE, 2010, pp. 2054--2060.

\bibitem{saligrama2012video}
V.~Saligrama and Z.~Chen, ``Video anomaly detection based on local statistical
  aggregates,'' in \emph{IEEE Conference on Computer Vision and Pattern
  Recognition (CVPR)}.\hskip 1em plus 0.5em minus 0.4em\relax IEEE, 2012, pp.
  2112--2119.

\bibitem{huo2014multi}
J.~Huo, Y.~Gao, W.~Yang, and H.~Yin, ``Multi-instance dictionary learning for
  detecting abnormal events in surveillance videos,'' \emph{International
  Journal of Neural Systems}, vol.~24, no.~03, p. 1430010, 2014.

\bibitem{zhang2016combining}
Y.~Zhang, H.~Lu, L.~Zhang, and X.~Ruan, ``Combining motion and appearance cues
  for anomaly detection,'' \emph{Pattern Recognition}, vol.~51, pp. 443--452,
  2016.

\bibitem{shalev2007online}
S.~Shalev-Shwartz and Y.~Singer, ``Online learning: Theory, algorithms, and
  applications,'' 2007.

\end{thebibliography}

%




\end{document}